\newtheorem{theorem}{Theorem}
\newtheorem{definition}{Definition}
\newtheorem{proposition}{Proposition}
\newtheorem{lemma}{Lemma}
\newcommand{\cA}{{\mathcal A}}
\newcommand{\cG}{{\mathcal G}}
\newcommand{\cP}{{\mathcal P}}
\newcommand{\cS}{{\mathcal S}}
\newcommand{\cY}{{\mathcal Y}}
\newcommand{\cX}{{\mathcal X}}
\providecommand{\customgenericname}{}
\newcommand{\newcustomtheorem}[2]{%
  \newenvironment{#1}[1]
  {%
  \renewcommand\customgenericname{#2}%
  \renewcommand\theinnercustomgeneric{##1}%
  \innercustomgeneric
  }
  {\endinnercustomgeneric}
}
\theoremstyle{plain}
\theoremstyle{definition}
\theoremstyle{remark}
\title{All Language Models Large and Small}
\author{Zhixun Chen\\
Kings College London\\
\And
Yali Du\\
Kings College London\\
\texttt{yali.du@kcl.ac.uk}\\
\And
David Mguni\\
Queen Mary University, London\\
\texttt{davidmguni@hotmail.com}
}
\begin{document}
% this must go after the closing bracket ] following \twocolumn[ ...

% This command actually creates the footnote in the first column
% listing the affiliations and the copyright notice.
% The command takes one argument, which is text to display at the start of the footnote.
% The \icmlEqualContribution command is standard text for equal contribution.
% Remove it (just {}) if you do not need this facility.
 % leave blank if no need to mention equal contribution
% \printAffiliationsAndNotice{\icmlEqualContribution} % otherwise use the standard text.
\maketitle
\begin{abstract}\label{sec:abs}
Many leading language models (LMs) use high-intensity computational
resources both during training and execution. This poses the challenge of lowering resource costs for deployment and faster execution in decision-making tasks among others. We introduce a novel plug \& play LM framework named \textbf{L}anguage \textbf{O}ptimising \textbf{N}etwork \textbf{Di}stribution (LONDI). LONDI learns to selectively employ large LMs only where complex decision-making and reasoning are required while using low-resource LMs (i.e. LMs require less GPU usage, but may not be able to solve the problem alone) everywhere else. LONDI consists of a system of two (off-)policy networks, an LM, a large LM (LLM), and a reinforcement learning module that uses \textit{switching controls} to quickly learn in which system states to call the LLM.  We then introduce a variant of LONDI that maintains budget constraints on LLM calls and hence its resource usage. Theoretically, we prove LONDI learns the subset of system states to activate the LLM required to solve the task. We then prove that LONDI converges to optimal solutions while also preserving budgetary constraints on LLM calls almost surely enabling it to solve various tasks while significantly lowering computational costs. We test LONDI's performance in a range of tasks in ScienceWorld and BabyAI-Text and demonstrate that LONDI can solve tasks only solvable by resource-intensive LLMs while reducing GPU usage by up to 30\%.
% \DM{abstract is a little long - needs trimming} 
% \DM{perhaps we can include something about LONDI being plug \& play here} 
% \DM{we should introduce the name of the framework LONDI and then refer to it as that}
\end{abstract}

% \DM{reinstate the anonymous affiliation footer}

% \section*{How to change DESTA and MA$ $NSA's implementation to get this}
% \begin{itemize}
% \item Reduce number of agents to 1
% \item Replace QMIX with a LLM and replace IQL with
% small LM
% \item Use output of LLM as input to small LM (textual description).
% \item Maintain SAC for the Switcher switching agent
% \end{itemize}

% \section*{Relevant papers}
% \begin{itemize}
% \item Yuchen Lin, Bill, et al. "SWIFTSAGE: A Generative Agent with Fast and Slow Thinking for Complex Interactive Tasks." arXiv e-prints (2023): arXiv-2305.    
% \item Mguni, David Henry, et al. "MA$ $NSA: Learning Fast and Slow in Multi-Agent Systems." International Conference on Machine Learning. PMLR, 2023.
% \item Mguni, David, et al. "Timing is everything: Learning to act selectively with costly actions and budgetary constraints." arXiv preprint arXiv:2205.15953 (2022).
% \end{itemize}

\section{Introduction}\label{sec:intro}
Large language models (LLMs) have emerged as powerful tools that can assist humans to accomplish a wide range of tasks such as medical tasks \cite{lin2023pushing}, language education \cite{caines2023application}, autonomous driving \cite{fu2023drive} and recreational games \cite{feng2024chessgpt}. As LLMs originate from data center warehouses, they are expected to gradually extend their reach to edge devices such as personal computers, smartphones, and even Internet of Things (IoT) devices \cite{huawei24,qualcomm24}. This shift is driven by the desire for enhanced data privacy, availability of AI functionalities, and personalised experiences \cite{yi2023edgemoe}. For instance, Qualcomm has effectively implemented stable diffusion, a text-to-image generative LLM model, on smartphones \cite{qualcomm24}. Multimodal LLMs have been integrated into smartphones, enabling precise content searching through natural language queries \cite{huawei24}.

However, current LLMs require massive computational resources for training and execution, which makes the application of LLMs in edge devices without large memory storage still unrealistic \cite{ahmed2020dedemocratization,yi2023edgemoe}. Smaller language models (LMs) may offer a practical solution to the computational constraints. Whereas the ability of LLM is significantly correlated with the size of its parameter set \cite{brown2020language}. The performance of smaller LMs is dramatically degraded compared with larger LMs, rendering smaller LMs unable to solve some tasks \cite{ding2023efficiency}. A practical solution is to combine a large LM with a small LM and activate the large LM only under a set of conditions. Though there have been several attempts to combine LMs in this way, the challenge of systematically constructing a set of conditions to activate LMs has not yet been fully resolved.
% \begin{figure}
%   \centering
%   \includegraphics[width=0.22\textwidth,height=0.19\textwidth]{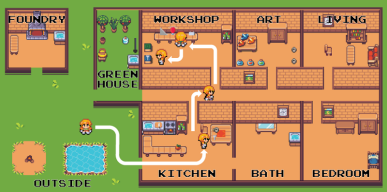}
%   \hspace{1mm}
%   \includegraphics[width=0.22\textwidth,height=0.19\textwidth]{}
%   \caption{\textit{left}. The ScienceWorld task, Create a circuit. To complete the task, the agent must navigate to the hallway first and then determine the correct room to enter then use the material present in the room. \textit{right}. Heatmap of LONDI DEEPTHINK calls. LONDI activates DEEPTHINK mostly in the hallway and the workshop. This corresponds to the location where (relatively) complex reasoning must be employed for the agent to navigate to the correct room and execute scientific experiment tasks.}
%   \label{fig_sw}
% \vspace{-.5cm}
% \end{figure}

Dual process theory (DPT) \cite{Kahneman11} posits that human thinking and decision-making involve two separate cognitive processes or systems, with one being characterised as fast, automatic, and intuitive, while the other is described as slow, controlled, and reflective. According to this theory, this delineation enables fast, reactive decision-making in states where complex modes of thought are not required while employing slower yet more sophisticated, complex reasoning where it is required.  

Inspired, by DPT, we propose a dual language model structure that consists of two LMs and an adaptive reinforcement learning (RL) agent, {\fontfamily{cmss}\selectfont Switcher} as a switch mechanism to automate selective activations of the LMs. By applying a smaller LM (which we call QUICK) as a fast response module and a larger LLM (which we call DEEPTHINK) as the slow thinking module, LONDI's adaptive switch mechanism learns to efficiently manage cooperative delegation between two LMs to solve tasks while reducing the resource cost burden of LMs. Specifically, the switch agent, based on switching control policy \cite{mguni2023learning,mguni2022timing,pmlr-v202-mguni23a}, determines the states in which to activate DEEPTHINK, while QUICK is utilised in all other states. Therefore, we approach the problem of computational constraints within LLMs from a systematic standpoint using the switching control, which proves the optimality of our framework and convergence to a policy that only activates DEEPTHINK at the beneficial set of states. LONDI has a cost parameter $c$ , which plays an important role in calibrating the resource-consumption/performance trade-off of the system of MARL. Larger values of $c$ incur higher costs for each activation of the DEEPTHINK large language model by the {\fontfamily{cmss}\selectfont Switcher}. This in turn makes the {\fontfamily{cmss}\selectfont Switcher} more selective about activating the DEEPTHINK LLM, reserving its activations to a smaller number of states where the boost in performance is greatest (this is proven in  Theorem \ref{theorem:existence}). In the limit $c\to \infty$ the Switcher becomes extremely thrifty in which case LONDI solely uses the QUICK model. This enables the resource-consumption/performance trade-off to be easily calibrated by the user. Moreover, in Sec. \ref{sec:londi_budget} we introduce a variant of LONDI, namely LONDI-B that imposes a budgetary constraint on the number of DEEPTHINK calls and prove that LONDI optimises its use of DEEPTHINK while respecting the budget constraint almost surely.

\begin{figure}[!htp]
  \centering
  \subfloat[Task Map]{\includegraphics[width=0.20\textwidth,height=0.19\textwidth]{Figures/scienceworld_graphic_1.png}\label{fig_sw}}
  \subfloat[budget=2]
  {\includegraphics[width=0.20\textwidth,height=0.19\textwidth]{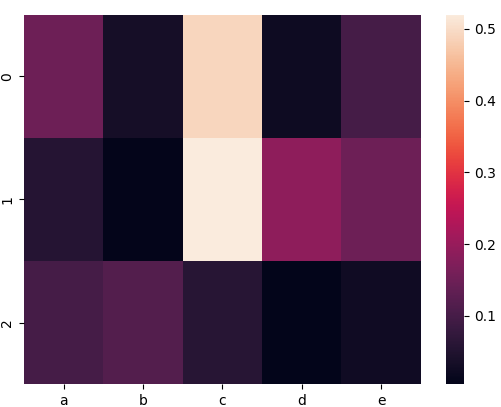}\label{fig:subfig7}}
  \subfloat[budget=4]
  {\includegraphics[width=0.20\textwidth,height=0.19\textwidth]{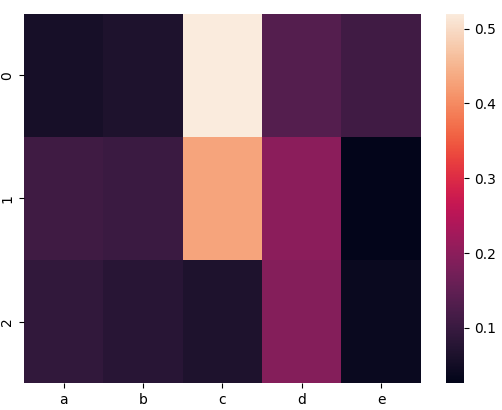}\label{fig:subfig8}}
  \subfloat[budget=6]
  {\includegraphics[width=0.20\textwidth,height=0.19\textwidth]{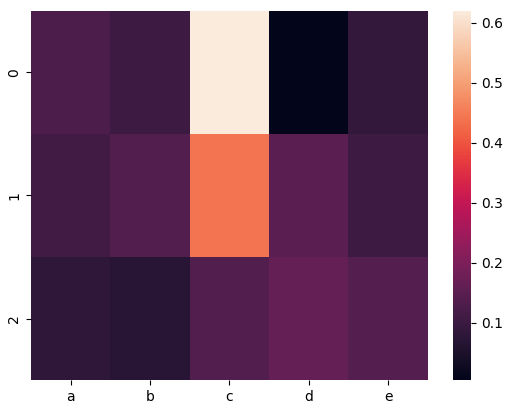}\label{fig:subfig9}}
  \subfloat[budget=8]
  {\includegraphics[width=0.20\textwidth,height=0.19\textwidth]{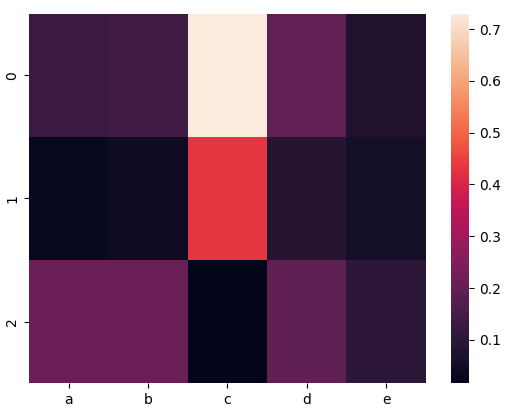}\label{fig:subfig10}}
  \caption{(a). The ScienceWorld task, Create a circuit. To complete the task, the agent must navigate to the hallway first and then determine the correct room to enter then use the material present in the room. (b),(c),(d),(e) sub-figures are the heatmap of LONDI DEEPTHINK calls with different budget on ScienceWorld task, Create a circuit. Since the agent needs to pass through the hallway to reach the workshop initially, LONDI must activate DEEPTHINK model in the hallway at least once. Subsequently, the remaining budget allocated to activating the DEEPTHINK model at the workshop to accomplish the task and obtain a higher reward. Therefore, as the budget decrease, LONDI focuses its activations of DEEPTHINK solely at the critical parts of the map such as hallway, resulting in lighter colors in the heatmap.}
  \label{hp_dif_budget}
\end{figure}

In practical settings, adjusting the cost $c$ to align with the available computational resources and communication constraints \cite{ahmed2020dedemocratization} maybe time-consuming. To accommodate such instances, we extend LONDI to LONDI-B in Sec.\ref{sec:londi_budget}, which introduces a budget facility into the LONDI framework. In this setup, LONDI maintains a budget on the number of DEEPTHINK calls allowed. Our theory then proves that LONDI-B preserves the budget constraint almost surely (with probability 1) while ensuring that DEEPTHINK is activated at the most beneficial set of states given the budget on the number of DEEPTHINK calls (see Sec. \ref{sec:convergence}).
% \begin{figure}
%   \centering
%   \includegraphics[width=0.22\textwidth,height=0.19\textwidth]{Figures/heatmap/}
%   \hspace{1mm}
%   \includegraphics[width=0.22\textwidth,height=0.19\textwidth]{}
%   \caption{\textit{left}. Heatmap of LONDI SAGE calls on ScienceWorld task, Create a circuit. The budget of LONDI is 6 and we alter the SAGE model from Flan-t5-large to Llama2 after training. \textit{right}. Heatmap of SWIFTSAGE SAGE calls on the same task. The SAGE model here is also Llama2, which has better compatibility with the prompt employed in SWIFTSAGE compared to Flan-t5, and the activation time of SAGE is also limited to 6. The comparison of two heatmaps indicates that LONGI selectively activates computationally intensive models only in significant locations, thereby conserving GPU usage.}
%   \label{sw_londi_hp}
% \vspace{-.5cm}
% \end{figure}
Overall, LONDI has several advantages:  
%  
% by switching to a DEEPTHINK mode of learning at states in which DEEPTHINK is required to learn the agents' optimal actions,
\newline
    $\bullet$ By switching to DEEPTHINK only at states where it is beneficial while leveraging the computational thriftiness of QUICK, LONDI solves various tasks while reducing computational expense (see Sec. \ref{results_main}, Appendix. \ref{sec:abl_results}). 
    \newline
    $\bullet$ LONDI can preserve fixed budgets on the number of DEEPTHINK calls, balancing performance and computational cost under the limited budgets.(see Sec. \ref{results_main}).
    % $\bullet$ LONDI activates DEEPTHINK only when required leading to LONDI boosting QUICK performance and enabling QUICK to solve tasks which it would otherwise fail (see Appendix. \ref{sec:abl_results}).
    \newline
    $\bullet$ LONDI is a plug-and-play framework that seamlessly adopts any QUICK and DEEPTHINK module. (see Appendix. \ref{sec:abl_results}). 
\section{Related Work}\label{sec:related work}
% Large language models have shown exciting potential for planning for solving interactive tasks. 
Recently, various tools have been proposed to augment the capabilities of LLMs. SayCan \cite{ahn2022i} employs an LLM with an additional value function to assign scores to high-level actions, and then utilises a low-level planner to map these actions to determine their feasibility in the physical world. \cite{Lin_Huang_Liu_Gu_Sommerer_Ren_2023} proposes an encoder-decoder structure to facilitate the planning ability of LM. \cite{pmlr-v162-huang22a} decomposes tasks into mid-level plans and maps outputs to available actions. DEST \cite{wang2023describe} applies a self-explanation mechanism for error correction and a goal-selector to rank sub-goals. Combining with PDDL, \cite{guan2023leveraging} utilises LLM to generate, translate and validate PDDL models to address planning tasks. Combined with reinforcement learning, GFlan \cite{carta2023grounding} employs Flan-T5 \cite{chung2022scaling} as the action policy and updates it with online PPO algorithm \cite{schulman2017proximal}. However, all these methods encounter the problem of high computational resource cost.  To address the challenge of high computational costs, in this paper, we introduce a switching mechanism within a dual LM structure delineated by a low-cost and high-cost LM. 

Closest to our work is the SWIFTSAGE \cite{lin2023swiftsage} framework, which combines a small LM module as the fast system and a large LLM module as the slow system. By combining two LLMs with varying sizes and computing power, the framework tackles intricate interactive reasoning tasks while mitigating the computational load. Although it achieves remarkable performance with GPT-4, the method to interpolate between the two modules uses a hand-crafted heuristic protocol which can lead to suboptimal performance (see Sec. \ref{results_main}). Another similar work is the FrugalGPT \cite{chen2023frugalgpt}, which combines a cascade of LLMs and a score function to decide which LLM to use. However, defining an appropriate score function to guide the LLM selection process is challenging in complex planning tasks. Therefore, the computational constraint problem is imperfectly resolved in both cases. In comparison, LONDI uses reinforcement learning in conjunction with a type of policy known as \textit{switching controls} to learn at which system states the DEEPTHINK module should be activated. Moreover, this systematic learning approach to the LLM activation enables a variant of LONDI to maintain a budget constraint on the number of DEEPTHINK calls (see Sec. \ref{sec:londi_budget}).

The switching structure is similar to the mechanism of a psychological framework, dual process theory \cite{Wason1975DualPI, Kahneman11}.  
% Inspired by this theory, \cite{NIPS2017_d8e1344e} proposes Expert Iteration as the slow system to assist the training of RL algorithms. \cite{Miech_2021_CVPR} applies it to solve the cross-modal retrieval problem. To better coordinate and combine two systems, \cite{ganapini2022thinking} proposes a meta-cognitive component to obtain higher decision quality. 
% 
Dual process structures have inspired various mechanisms in reinforcement learning used to improve learning efficiency. ROSA \cite{mguni2023learning} and LIGS \cite{mguni2022ligs} incorporate a dual switching method to activate a reward-shaping module to promote state visitations and coordination between adaptive agents in an RL and MARL respectively. LICRA \cite{mguni2022timing} adds a trainable switch to decide whether to use a costly execution-policy system to generate actions. Similarly, MANSA \cite{pmlr-v202-mguni23a} has an additional switch to decide whether to activate centralised training, a computationally expensive learning mode that facilitates coordination among adaptive agents. 
% Limiting CL calls significantly decreases the computational cost of the entire training process.

\section{LONDI}
We now describe the problem setting, details of our framework, and how it learns to select where to activate the DEEPTHINK  large language model. We then describe {\fontfamily{cmss}\selectfont Switcher}'s objective and the switching control mechanism it uses to learn where to activate the DEEPTHINK model.   
\subsection{Markov Decision Process}
In this paper, we consider a setting in which an agent is tasked with solving a decision-making problem.  We give some preliminaries on Markov decision processes which is the underlying formalism for our problem. A Markov decision process (MDP) \cite{puterman2014markov} is given by a tuple $\left\langle\mathcal{S,A,P,R,}\gamma\right.\rangle$ where $\mathcal{S}$ represents the set of states, $\mathcal{A}$ means the set of (discrete) actions, the transition probability $\mathcal{P}:\mathcal{S}\times\mathcal{A}\times\mathcal{S}\rightarrow\left[0,1]\right.$ indicates the system dynamics, the reward function $\mathcal{R}:\mathcal{S}\times\mathcal{A}\rightarrow\mathbb{R}$ describes the performance of the agent, the reward discount factor $\gamma\in[0,1]$ defines the level of discount applied to the rewards. At time $t$, the system is at state $s_t\in\mathcal{S}$ and the agent decides on its action $a_t\in\mathcal{A}$ using the policy $\pi: \mathcal{S}\times\mathcal{A}\rightarrow[0,1]$, where $\pi(a|s)$ is the probability of choosing action $a\in\mathcal{A}$ under state $s\in\mathcal{S}$. The action transitions the system to a new state $s_{t+1}\sim\mathcal{P}(\cdot|s_t,a_t)$ and the agent then receives a scalar reward $r\sim\mathcal{R}(s_t,a_t)$. In the standard setup of an MDP,
the agent's objective is to maximise the cumulative expected rewards $v^{\pi}(s):=\mathbb{E}[\sum^{+\infty}_{t=0}\gamma^{t}R(s_t,a_t)|a_t\sim\pi(\cdot|s_t)]$ using a policy $\pi^{\ast}\in\Pi$ where $\Pi$ is the policy set of the agent. 
\subsection{Problem Formulation}
To tackle the challenges described in Sec. \ref{sec:related work}, we introduce an adaptive learner which we call {\fontfamily{cmss}\selectfont Switcher} that decides on the states to activate DEEPTHINK while using the less computationally expensive QUICK language model to determine actions everywhere else. The {\fontfamily{cmss}\selectfont Switcher} needs to make a binary decision (whether to activate DEEPTHINK or not) at each system state, where a state in the current setting is a representation of a specific \textit{context} or \textit{condition}. The choice to activate a larger LLM can be seen as an action that transitions the system to a new state, which returns a reward or penalty. Lastly, the {\fontfamily{cmss}\selectfont Switcher} has an objective quantified by the expected return. 

We can formalise the {\fontfamily{cmss}\selectfont Switcher} problem as an MDP as the problem involves sequential decision-making (under uncertainty) with Markovian transitions. Specifically, $\mathcal{S}$ represents the system state space, $\boldsymbol{\mathcal{A}}\equiv\mathcal{A}_S\cup\mathcal{A}$ denotes the action set where action is executed in the environment and which is decided by an `active' language model, $\mathcal{A}_S\equiv \{0,1\}$ denotes the {\fontfamily{cmss}\selectfont Switcher}'s binary action set, $\mathcal{A}$ represents the action set common to both language models, $\mathcal{P}$ indicates the transition after the action and $\mathcal{R}:\mathcal{S}\times\boldsymbol{\mathcal{A}}\to \mathbb{R}$ represents the return of the environment after the action of LLM activated by the {\fontfamily{cmss}\selectfont Switcher}. At any given instant, only one of the DEEPTHINK and QUICK modules is activated and hence able to take action. To make a decision, the {\fontfamily{cmss}\selectfont Switcher} samples a decision $g$ from its policy $\mathfrak{g}:\mathcal{S}\rightarrow\{0,1\}$ where $g=1$ indicates an activation of the DEEPTHINK module in which case the action $a\sim\pi^{\rm DEEP}$ is executed while $g=0$ indicates that no activation of the DEEPTHINK module occurs so the QUICK module is active in which case the action $a\sim\pi^{\rm switch}$ is executed where $\pi^{\rm QUICK}$ and $\pi^{\rm DEEP}$ are policies associated to the QUICK and DEEPTHINK modules respectively.

Under the MDP setting, the goal of the {\fontfamily{cmss}\selectfont Switcher} is to maximise the cumulative return, namely the overall performance of the structure. To prompt {\fontfamily{cmss}\selectfont Switcher} to make selective activations decisions, a fixed cost associated with each activation is imposed on {\fontfamily{cmss}\selectfont Switcher}, represented by a constant value $c<0$. The incurred costs encourages the {\fontfamily{cmss}\selectfont Switcher} to activate DEEPTHINK model only when the activation is advantageous for the system's performance, either in the current state or in subsequent states. The objective of the {\fontfamily{cmss}\selectfont Switcher} policy $\mathfrak{g}$ is 
\begin{align*}
\hspace{-3 mm}v_S(s|{\pi},\mathfrak{g})  = \mathbb{E}_{g\sim\mathfrak{g}}\left[ \sum_{t=0}^\infty \gamma^t\left(r -c\cdot {1}(g(s_t))\right)\Big|s_0=s; a_t\sim\pi\right]
\end{align*}
% and {\fontfamily{cmss}\selectfont Switcher}'s action-value function is $Q_S(s,{a}|{\pi},\mathfrak{g})=\mathbb{E}_{g\sim\mathfrak{g}}\left[\sum_{t=0}^\infty \gamma^t(r-c\cdot {1}(\mathfrak{g}(s_t)))|s_0=s,{a_0}={a}; {a}\sim{\pi}\right]$.  
and the action-value function of it is $Q_{S}(s,{a}|{\pi},{\mathfrak{g}})=\mathbb{E}_{g\sim\mathfrak{g}}[\sum^{\infty}_{t=0}\gamma^t(r-c\cdot\textbf{1}(g(s_t)))|s_0=s;{a_0}={a};{a_t}\sim{\mathfrak{\pi}}]$, where $\pi$ is either $\pi^{\rm QUICK}$ or $\pi^{\rm DEEP}$.
With this objective, {\fontfamily{cmss}\selectfont Switcher}'s goal is to maximise the system performance by activating DEEPTHINK at the required set of states to enable the task to be solved with the minimal number of DEEPTHINK activations. Therefore, by learning an optimal $\mathfrak{g}$, {\fontfamily{cmss}\selectfont Switcher} acquires the optimal policy for activating  DEEPTHINK.

% \noindent\textbf{\underline{{Summary of events:}}}\\

% At a time $t\in 0,1\ldots$ 
%     \begin{itemize}
%     % [leftmargin=*]
%         \item The {\fontfamily{cmss}\selectfont Switcher} makes an observation of the state $s_t\in\mathcal{S}$.
%         \item {\fontfamily{cmss}\selectfont Switcher} decides whether or not to activate the DEEPTHINK model $\mathfrak{g}:\mathcal{S}  \to \{0,1\}$.
%         \item If $\mathfrak{g}(s_t)=0$:
%         \begin{itemize}
%         % [leftmargin=*]
%             \item[\textcolor{white}{X}$\circ$] The switch is not activated, the QUICK model takes an action and the {\fontfamily{cmss}\selectfont Switcher} receives a reward $r\sim R$ and the system transitions to the next state $s_{t+1}$.
%         \end{itemize}
%         \item If $\mathfrak{g}(s_t)=1$:
%         \begin{itemize}
%         % [leftmargin=*]
%             \item[\textcolor{white}{X}$\circ$] The switch is activated, an action is chosen by the DEEPTHINK model.
%             \item[\textcolor{white}{X}$\circ$] The {\fontfamily{cmss}\selectfont Switcher} receives a reward $r-c$ and the system transitions to the next state $s_{t+1}$.
%         \end{itemize}
%     \end{itemize}

Adding the agent {\fontfamily{cmss}\selectfont Switcher} an additional learning process with a distinct objective can lead to non-convergence among some methodologies \cite{shoham2008multiagent}.\footnote{In particular, it results in a non-cooperative Markov game.} We nevertheless prove in Sec. \ref{sec:convergence} the convergence of LONDI under standard RL assumptions.

\subsection{Switching Controls} 
% \vspace{-0.1em}
The {\fontfamily{cmss}\selectfont Switcher} is tasked with learning the set of states that require the additional decision capacity provided by the DEEPTHINK model in order to achieve the optimal policy. To do this, at each state {\fontfamily{cmss}\selectfont Switcher} first makes a \textit{binary decision} to decide whether to activate its DEEPTHINK.  Switching controls enable {\fontfamily{cmss}\selectfont Switcher} to learn at which states it ought to activate the DEEPTHINK model. Therefore, in LONDI, the {\fontfamily{cmss}\selectfont Switcher} agent uses a form of policies known as \textit{switching controls} \cite{mguni2022timing,mguni2018viscosity}.
% affecting a switch $I_t$ which takes values in $\{0,1\}$. 
This leads to an RL problem in which, unlike the standard setup of an MDP, the {\fontfamily{cmss}\selectfont Switcher} agent now uses \textit{switching controls} to select its decisions. 

\textbf{{Summary of events}}\newline
At a time $t\in0,1...$\newline
$\bullet$ $\mathsf{Encoder}$ process the state $s_t\in\mathcal{S}$ \newline
$\bullet$ {\fontfamily{cmss}\selectfont Switcher} decides whether to activate the DEEPTHINK model according to the decision $g\sim\mathfrak{g}:\mathcal{S}\rightarrow\{0,1\}$:\newline
$\bullet$ if $g=0:$\newline
$\circ$ The QUICK is activated and samples an action $a_t$ from its policy $\pi^{\rm QUICK}$. The switch receives a reward $r\sim\mathcal{R}(s_t,a_t)$ and the system shift to the subsequent state $s_{t+1}$\newline
$\bullet$ if $g=1:$\newline
$\circ$ DEEPTHINK is activated and samples an action $a_t$ from its policy $\pi^{\rm DEEP}$. {\fontfamily{cmss}\selectfont Switcher} receives a reward $r-c$ where $r\sim\mathcal{R}(s_t,a_t)$ and the system transitions to the state $s_{t+1}$.

We now describe how at each state {\fontfamily{cmss}\selectfont Switcher} decides whether to activate DEEPTHINK. At any $s_t$, the decision to turn the DEEPTHINK model is decided by a (categorical) policy $\mathfrak{g}:\mathcal{S} \to \{0,1\}$. 
We denote by $\{\tau_k\}$ the times that activation takes place, for example, if the DEEPTHINK model is first activated at state $s_5$ then turned off at $s_7$, then $\tau_1=5$ and $\tau_2=7$. Recalling the role of $\mathfrak{g}$, the switching times obey the expression $\tau_k=\inf\{t>\tau_{k-1}|s_t\in\mathcal{S},\mathfrak{g}(s_t)=1\}$ and are therefore
% \footnote{More precisely, $\{\tau_k\}_{k\geq 0}$ are \textit{stopping times}  \cite{oksendal2003stochastic}. }
\textit{\textbf{rules} that depend on the state.}. The termination times $\{\tau_{2k-1}\}$ occur according to some external (probabilistic) rule i.e., if at state $s_t$ DEEPTHINK is active, then DEEPTHINK deactivates at state $s_{t+1}$ with probability $p\in ]0,1]$.  Hence, by learning an optimal $\mathfrak{g}$, {\fontfamily{cmss}\selectfont Switcher} learns the best states to activate DEEPTHINK.

\subsection{Architecture}
% \textbf{LONDI's components.} 
We now describe a concrete realisation of LONDI's core components which consist of $2$ language models, a large language model (LLM) {\fontfamily{cmss}\selectfont DEEPTHINK}, a small language model as  {\fontfamily{cmss}\selectfont QUICK} and a switching control RL algorithm as {\fontfamily{cmss}\selectfont Switcher}. Each component (including the LLMs) can be replaced by various other components.
\\
    $\bullet$ \textbf{QUICK model.} In this paper, we use a pretrained Flan-T5-small model \cite{chung2022scaling} as the QUICK module.
    \newline
    $\bullet$ \textbf{DEEPTHINK model}. We use a pre-trained Flan-T5-large model \cite{chung2022scaling} as the DEEPTHINK module. It performs twice as well as the QUICK module on average.\footnote{The performance comparison is under our experiment setting, not the general comparison in all environments.} 
    \newline
    $\bullet$ \textbf{Switching Control Policy \cite{mguni2022timing}.} A soft actor-critic (SAC) \cite{haarnoja2018soft} agent called {\fontfamily{cmss}\selectfont Switcher} whose policy's action set consists of $2$ actions: 1) call the DEEPTHINK LLM 2) call the QUICK model.
    \newline
     $\bullet$ \textbf{Switching Control Encoder.} To enable  SAC to perform in a textual environment we introduce an encoder to process text information. It consists of a transformer to turn text into a matrix and an MLP to condense information.  

\section{Convergence and Optimality of LONDI} \label{sec:convergence}

% The framework aims at enabling {\fontfamily{cmss}\selectfont Switcher} to
% learn an optimal reward function with which the $N$ agents then learn the optimal policy ${\hat{\pi}}\in{\Pi}$ for the task set by the environment. 
 \textcolor{black}{We now show that the LONDI framework converges to the solution that maximises the {\fontfamily{cmss}\selectfont Switcher} agent's value function and therefore the performance of the system. With this, the {\fontfamily{cmss}\selectfont Switcher} agent learns to activate DEEPTHINK only at the set of states at which doing so improves the system performance. In particular: Theorem 1 shows LONDI learns the optimal solution for {\fontfamily{cmss}\selectfont Switcher} so that it activates DEEPTHINK only when it is profitable to do so over the horizon of the problem (recall that each activation incurs a DEEPTHINK cost).  Finally, we characterise the optimal DEEPTHINK activation points and show that {\fontfamily{cmss}\selectfont Switcher} can use a condition on its action-value function that can be evaluated online to determine when to activate DEEPTHINK (for the case when {\fontfamily{cmss}\selectfont Switcher} uses a Q-learning variant).  All our results are built under Assumptions 1 - 7 (Sec. \ref{sec:notation_appendix} of the Appendix) which are standard in RL and stochastic approximation theory \cite{bertsekas1996neuro}.
 }
 
 \textcolor{black}{The following theorem shows the solution of {\fontfamily{cmss}\selectfont Switcher}'s problem is a limit point of a sequence of Bellman operations acting on a value function (i). It then shows {\fontfamily{cmss}\selectfont Switcher} converges to the solution (ii). }

\begin{theorem}\label{theorem:existence}
\textbf{i)} Let $v_S:\mathcal{S}\to\mathbb{R}$ then for any fixed policies ${\pi}^{\rm QUICK},{\pi}\in {\Pi}$ the solution of {\fontfamily{cmss}\selectfont Switcher}'s problem is given by $\underset{k\to\infty}{\lim}T_S^kv_S(\cdot|{\pi},\mathfrak{g})=\underset{\hat{\mathfrak{g}}}{\max}\;v_S(\cdot|{\pi},\hat{\mathfrak{g}})$,\\
where $T_S$ is given by $
T_S v_S:=\max\Big\{\mathcal{M}^{\mathfrak{g},{\pi}^{\rm DEEP}}Q_S,\underset{{a}\in{\mathcal{A}}}{\max}\;\left[ R_S+\gamma\sum_{s'\in\mathcal{S}}P(s';\cdot)v_S(s')\right]\Big\}$ and  
$
\mathcal{M}^{\mathfrak{g},{\pi}^{\rm DEEP}}Q_S(s,{a}|\cdot):=Q_S(s,{\pi}^{\rm DEEP}(s)|\cdot)-c$ \textcolor{black}{which measures the expected return for {\fontfamily{cmss}\selectfont Switcher} following a switch to the DEEPTHINK model minus the switching cost $c$.}
% $
% \mathcal{M}v_S^{\pi,\mathfrak{g}}(s_{\tau_k}):=Q_S^{\pi,\mathfrak{g}}(s,{a'})+c|{a'}\in\underset{{a}\in{\cA}}{\arg\max}\;Q^{\pi,\mathfrak{g}}(s,{a})$
\newline
\textbf{ii)} LONDI converges under a Q-learning variant.   

% \textbf{ii)} LONDI converges to the stable point of $\mathcal{G}$, moreover, given a set of linearly independent basis functions $\Psi=\{\psi_1,\ldots,\psi_p\}$ with $\psi_k\in L_2,\forall k$, LONDI converges to a limit point $r^\star\in\mathbb{R}^p$ which is the unique solution to  $\Pi \mathfrak{F} (\Psi r^\star)=\Psi r^\star$ where $\mathfrak{F}$ is defined by:
%     $\mathfrak{F}\Lambda:=\hat{R}_1+\gamma P \max\{\mathcal{M}\Lambda,\Lambda\}$ where $r^\star$ satisfies: $
%     \left\|\Psi r^\star - Q^\star\right\|\leq (1-\gamma^2)^{-1/2}\left\|\Pi Q^\star-Q^\star\right\|$.
\end{theorem}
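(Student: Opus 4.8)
The plan is to establish part (ii) by exhibiting the {\fontfamily{cmss}\selectfont Switcher}'s Q-learning updates as an asynchronous stochastic-approximation recursion whose expected increment is driven by a Q-function version of the operator $T_S$ from part (i), and then to invoke the classical asynchronous Q-learning convergence theorem (in the style of Tsitsiklis and of Jaakkola--Jordan--Singh, as collected in \cite{bertsekas1996neuro}). Part (i) already supplies the two facts that make this work: $T_S$ is a sup-norm contraction with a unique fixed point, and that fixed point is the solution of {\fontfamily{cmss}\selectfont Switcher}'s problem. So the remaining task is purely to show that the sample-based, bootstrapped iterates track this fixed point almost surely.

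First I would write down the Q-analogue of $T_S$. Defining the {\fontfamily{cmss}\selectfont Switcher} value of a successor state through the obstacle, $v_S(s'):=\max\{Q_S(s',\pi^{\rm DEEP}(s'))-c,\ \max_{a'}Q_S(s',a')\}$, the induced operator is
\begin{align*}
\hat{T}_S Q_S(s,a) := R_S(s,a) + \gamma\sum_{s'\in\mathcal{S}}P(s'|s,a)\,\max\Big\{ Q_S(s',\pi^{\rm DEEP}(s'))-c,\ \max_{a'}Q_S(s',a')\Big\}.
\end{align*}
The essential point is that the intervention term $\mathcal{M}^{\mathfrak{g},\pi^{\rm DEEP}}Q_S=Q_S(\cdot,\pi^{\rm DEEP}(\cdot))-c$ enters only through the successor state inside the $\gamma$-discounted backup, so using $|\max\{x,y\}-\max\{x',y'\}|\le\max\{|x-x'|,|y-y'|\}$ twice gives $\|\hat{T}_S Q - \hat{T}_S Q'\|_\infty\le\gamma\|Q-Q'\|_\infty$. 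Hence $\hat{T}_S$ is a $\gamma$-contraction with unique fixed point $Q_S^\star$ whose greedy/switching read-off coincides with the solution from part (i).

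Next I would cast the Q-learning variant in canonical form, componentwise,
\begin{align*}
Q_{t+1}(s,a)=(1-\alpha_t(s,a))Q_t(s,a)+\alpha_t(s,a)\big[(\hat{T}_S Q_t)(s,a)+M_t(s,a)\big],
\end{align*}
where $M_t(s,a)$ is the difference between the realised one-step target (built from the sampled successor $s'\sim P(\cdot|s,a)$, the observed reward, and the exogenous DEEPTHINK-termination draw with probability $p$) and its conditional mean $\hat{T}_S Q_t(s,a)$. I would then check the hypotheses of the asynchronous stochastic-approximation theorem: (a) $\hat{T}_S$ is the contraction established above with fixed point $Q_S^\star$; (b) under Assumptions 1--7, rewards and the constant cost $c$ are bounded and the transition/termination randomness has bounded support, so $(M_t)$ is a martingale-difference sequence with $\mathbb{E}[M_t\,|\,\mathcal{F}_t]=0$ and $\mathbb{E}[M_t^2\,|\,\mathcal{F}_t]\le K(1+\|Q_t\|_\infty^2)$; (c) the step sizes satisfy $\sum_t\alpha_t(s,a)=\infty$ and $\sum_t\alpha_t^2(s,a)<\infty$; and (d) every state--action pair (including each switching decision) is updated infinitely often under the exploration assumption. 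These deliver $Q_t\to Q_S^\star$ almost surely, and since the switching policy induced by $Q_S^\star$ activates DEEPTHINK exactly where the intervention value dominates the continuation value, {\fontfamily{cmss}\selectfont Switcher} converges to the optimal activation rule.

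The step I expect to be the main obstacle is the obstacle/switching structure rather than the stochastic-approximation machinery. First, I must confirm that placing the intervention term $Q_S(\cdot,\pi^{\rm DEEP}(\cdot))-c$ inside the successor backup, rather than as an undiscounted self-reference at the current state, is the representation consistent with $T_S$ in part (i); this is precisely what preserves the single discount factor $\gamma$ and hence the contraction. Second, I must verify that the exogenous, persistent activation of DEEPTHINK (it remains on until terminated with probability $p$) can be folded into an augmented Markovian state, so that the recursion above is genuinely an asynchronous update of a single contraction operator with the termination randomness absorbed into the zero-mean noise $M_t$. A secondary point is the a priori boundedness of $\{Q_t\}$, which I would obtain from the contraction and bounded rewards by the standard stochastic comparison argument before passing to the limit.
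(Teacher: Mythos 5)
Your proposal is correct and is built on the same two pillars as the paper's own proof: a $\gamma$-contraction property for the switching Bellman operator, followed by the Jaakkola--Jordan--Singh stochastic-approximation theorem applied to $\Xi_t := Q_t - Q^\star$ with a martingale-difference noise term, step-size conditions, and a variance bound of the form $c(1+\|\Xi_t\|^2)$ --- the paper invokes exactly Theorem 1 of \citep{jaakkola1994convergence} and verifies the same three conditions you list. Where you genuinely diverge is in the contraction step, which is the bulk of the paper's argument. The paper works with the operator exactly as stated in the theorem, where the intervention value $\mathcal{M}^{\mathfrak{g},{\pi}^{\rm DEEP}}Q_S(s,\cdot)=Q_S(s,{\pi}^{\rm DEEP}(s))-c$ sits as an obstacle at the \emph{current} state, and it proves $\|T\psi-T\psi'\|\leq\gamma\|\psi-\psi'\|$ by a three-way case analysis (continuation vs.\ continuation, intervention vs.\ intervention, and the cross term, the last split into two sub-cases, one of which uses the sign of $c$); the discount factor survives there only because the intervention term is expanded one step into $R_S - c + \gamma P v_S$. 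You instead push the switching maximum inside the discounted successor backup, defining $\hat{T}_S$, for which contraction is a one-line consequence of the non-expansiveness of $\max$ and of $P$ (Lemmas of the type the paper quotes from \citep{tsitsiklis1999optimal}). This buys a much cleaner contraction proof, and your write-up is also more careful than the paper's on two points it glosses over: a priori boundedness of the iterates and the asynchronous/infinite-visitation requirement. What it costs you is an equivalence obligation that you flag but do not discharge: $\hat{T}_S$ is \emph{not} the $T_S$ of the statement --- its fixed point is a ``pre-decision'' Q-function, whereas the fixed point of the paper's operator dominates the intervention value pointwise --- so to obtain parts (i) and (ii) as stated you still need the short verification that the two fixed points induce the same state values and the same activation regions (they do, since taking $\max_{a'}$ in the paper's fixed-point equation recovers precisely your successor-state switching value). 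Relatedly, you treat part (i) as given, whereas in the paper the contraction lemma \emph{is} the proof of part (i); your reformulated contraction can serve that role, but only once this equivalence is in place.
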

%  \label{value_iteration_expression} and ${{\pi^\star}}\in\underset{{{\pi'}}\in{\Pi}}{\arg\sup}\;F^{{\pi'}}$.
\textcolor{black}{Therefore, Theorem \ref{theorem:existence} proves the solution to {\fontfamily{cmss}\selectfont Switcher}'s problem in which {\fontfamily{cmss}\selectfont Switcher} optimally selects the set of states to activate DEEPTHINK can be obtained by computing the limit of a (RL) dynamic programming procedure  (when {\fontfamily{cmss}\selectfont Switcher} uses a Q-learning variant). Secondly, it proves the LONDI system converges to the solution.} It is easy to see that an immediate consequence of the theorem is that LONDI learns to make the minimum number of DEEPTHINK calls required to learn the solution to the {\fontfamily{cmss}\selectfont Switcher} problem since any additional DEEPTHINK calls would render the {\fontfamily{cmss}\selectfont Switcher} agent's policy suboptimal.

The following result characterises {\fontfamily{cmss}\selectfont Switcher}'s policy $\mathfrak{g}$:

\begin{proposition}\label{prop:switching_times}
For any $s_t\in\mathcal{S}$ and for all ${a}_t\in{\mathcal{A}}$, the policy $\mathfrak{g}$ is given by: $\mathfrak{g}(\cdot|s_t)={1}_{\mathbb{R}_+}\left(\mathcal{M}^{\mathfrak{g},{\pi}^{\rm DEEP}}Q_S(s_t,{a}_t|\cdot)-\underset{{a}_t\in{\cA}}\max\; Q_S(s_t,{a}_t|{\pi},\mathfrak{g})\right)$ where ${1}$ is the indicator function.
% , moreover the intervention times 
% $\{\tau_k\}_{k\geq 1}$ 
% are $\tau_k=\inf\{\tau>\tau_{k-1}|\mathcal{M}^{\mathfrak{g}}v_S(\cdot|{\pi},\mathfrak{g})= Q_S(\cdot|{\pi},\mathfrak{g})\}$. 
\end{proposition}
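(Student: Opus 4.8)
The plan is to read the optimal switching rule directly off the fixed-point equation supplied by Theorem \ref{theorem:existence}. By part (i) of that theorem, for fixed $\pi^{\rm QUICK},\pi$ the iterates $T_S^k v_S$ converge to the optimal value $v_S^\star:=\max_{\hat{\mathfrak{g}}} v_S(\cdot|\pi,\hat{\mathfrak{g}})$, so $v_S^\star$ is a fixed point of $T_S$ and satisfies
\begin{align*}
v_S^\star(s)=\max\Big\{\mathcal{M}^{\mathfrak{g},\pi^{\rm DEEP}}Q_S(s,a|\cdot),\;\underset{a\in\mathcal{A}}{\max}\big[R_S+\gamma\textstyle\sum_{s'\in\mathcal{S}}P(s';\cdot)v_S^\star(s')\big]\Big\}.
\end{align*}
The right-hand side is a pointwise maximum of exactly two quantities: the intervention value $\mathcal{M}^{\mathfrak{g},\pi^{\rm DEEP}}Q_S$, corresponding to the Switcher choosing $g=1$ (activate DEEPTHINK and pay the cost $c$), and the continuation value, corresponding to $g=0$ (defer to QUICK). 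Using the Bellman consistency relation $Q_S(s,a|\pi,\mathfrak{g})=R_S+\gamma\sum_{s'}P(s';\cdot)v_S^\star(s')$ at the fixed point, the second branch is identified with $\max_{a\in\mathcal{A}}Q_S(s,a|\pi,\mathfrak{g})$.

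First I would invoke the standard fact that the policy which is greedy with respect to the fixed point of an optimality operator is itself optimal; since $v_S^\star$ attains the maximum in $T_S v_S^\star=v_S^\star$, the optimal Switcher policy selects whichever branch achieves this maximum. Concretely, $g=1$ is optimal at $s_t$ precisely when the intervention branch dominates, i.e.
\begin{align*}
\mathcal{M}^{\mathfrak{g},\pi^{\rm DEEP}}Q_S(s_t,a_t|\cdot)\;\geq\;\underset{a_t\in\mathcal{A}}{\max}\;Q_S(s_t,a_t|\pi,\mathfrak{g}),
\end{align*}
and $g=0$ otherwise. Writing this dichotomy through the indicator of the positive reals applied to the difference of the two branches yields exactly the claimed expression for $\mathfrak{g}(\cdot|s_t)$. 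I would also remark that neither $\mathcal{M}^{\mathfrak{g},\pi^{\rm DEEP}}Q_S$ (which replaces $a_t$ by $\pi^{\rm DEEP}(s_t)$) nor the maximised action-value depends on the particular $a_t$, which is why the expression is well-defined for all $a_t\in\mathcal{A}$ and produces a genuine state-dependent rule, consistent with $\mathfrak{g}:\mathcal{S}\to\{0,1\}$.

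The main subtlety to handle carefully is that the characterisation is \emph{implicit}: the policy $\mathfrak{g}$ appears on both sides, since $\mathcal{M}^{\mathfrak{g},\pi^{\rm DEEP}}Q_S$ and $Q_S(\cdot|\pi,\mathfrak{g})$ are both evaluated under $\mathfrak{g}$ itself. The statement is therefore a self-consistency (verification) condition satisfied by the optimal policy rather than a constructive formula, and the argument must lean on Theorem \ref{theorem:existence} to guarantee that such a fixed-point policy exists and is attained as the limit of the dynamic-programming recursion. A secondary, minor point is tie-breaking on the boundary of the switching region $\{s:\mathcal{M}^{\mathfrak{g},\pi^{\rm DEEP}}Q_S=\max_a Q_S\}$: there both choices of $g$ yield identical value, so the convention encoded by ${1}_{\mathbb{R}_+}$ does not affect optimality, and I would note this explicitly to close the argument.
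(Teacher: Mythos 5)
Your proof is correct, but it takes a genuinely different route from the paper's. The paper argues by contradiction on the \emph{activation times}: it writes $\tau_k=\inf\{t>\tau_{k-1}\,:\,\mathfrak{g}(s_t)=1\}$, supposes an optimal policy $\mathfrak{g}'$ whose first activation $\tau_1'$ occurs strictly later than the first time $\tau_1$ at which the intervention value $\mathcal{M}^{\mathfrak{g},\pi^{\rm DEEP}}Q_S$ dominates, then unrolls the value function along the trajectory and uses the bound $\mathbb{E}\left[R_S+\gamma\mathcal{M}v_S\right]\le\max\left\{\mathcal{M}v_S,\;\underset{a\in\mathcal{A}}{\max}\left[R_S+\gamma P v_S\right]\right\}=T_G v_S$ to exhibit a policy switching at $\tau_1$ whose value is at least that of $\mathfrak{g}'$, contradicting optimality; a symmetric argument rules out switching earlier than $\tau_k$. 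You instead invoke Theorem \ref{theorem:existence} to obtain the fixed point $v_S^\star=T_S v_S^\star$ and appeal to the standard verification fact that the policy greedy with respect to the fixed point of a contraction optimality operator is optimal, reading the indicator rule directly off the two branches of the max. Your route is shorter and cleaner, and it makes explicit two points the paper leaves implicit: the self-referential nature of the characterisation (with $\mathfrak{g}$ appearing on both sides, so it is a consistency condition rather than a constructive formula) and the tie-breaking convention on the boundary set where the two branches coincide. What it costs is that the step ``greedy with respect to the fixed point is optimal'' is used as a black box in this switching-control setting, whereas the paper's exchange argument on switching times effectively proves that fact by hand for the intervention structure, making it the more self-contained (if less transparent) argument. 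Both proofs inherit the same modelling quirk of $T_S$ (the continuation branch maximises over actions rather than averaging over $\pi^{\rm QUICK}$), so neither is at fault on that score.
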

 Prop. \ref{prop:switching_times} provides a characterisation of where {\fontfamily{cmss}\selectfont Switcher} should activate DEEPTHINK. The condition can be evaluated online during the training phase.

\section{LONDI with a DEEPTHINK Call Budget} \label{sec:londi_budget}
So far we have considered the case in which the aim of the {\fontfamily{cmss}\selectfont Switcher}  freely activates DEEPTHINK at the set of states that using DEEPTHINK leads to an appreciable increase in performance. The paramater $c$ plays a critical role in ensuring that the {\fontfamily{cmss}\selectfont Switcher} activates DEEPTHINK only at such states. Although the intuition behind the role that $c$ plays is clear, in various applications, the choice of $c$ is not obvious and maybe hard to finetune. In this section, we consider an alternative view and introduce a variant of LONDI, namely LONDI-B that imposes a budgetary constraint of the number of DEEPTHINK activation that can be executed by {\fontfamily{cmss}\selectfont Switcher}. This provides another facility trade-off between the resource-consumption vs performance to be calibrated according to the User's preference. 
We thereafter prove that by tracking its remaining budget, LONDI-B learns a policy that makes optimal usage of its DEEPTHINK budget while preserving the budget constraint on DEEPTHINK calls almost surely.  

In what follows, we now consider a problem setting in which the {\fontfamily{cmss}\selectfont Switcher} agent now faces a fixed budget on the number of DEEPTHINK calls. This leads to the following constrained MDP setting:
% \begin{subequations}
\begin{align*}
        \max\limits_{\mathfrak{g}}~& v_S(s|{\pi},\mathfrak{g})\;\; 
        \text{s. t. } n - \sum_{j<\infty }\sum_{t_j\geq 0}{1}(\mathfrak{g}(\cdot|s_{t_j}))\geq 0, 
        % \forall s\in\mathcal{S},   
        % \label{capped_problem}
\end{align*}
% \begin{align}\label{prob:average_constrained_mdp}
%         \max\limits_{\pi\in\Pi,g}~& v^{{\pi},\mathfrak{g}}_S(s) 
%         \label{prob:original_objective}\\
%      \text{s. t. }\qquad & n - \sum^\infty_{t=0}\sum_{k\geq 1} \delta^t_{\tau_k}\geq 0,   \label{prob:constraint}
% \end{align}\label{prob:average_constrained_mdp}
% \end{subequations}
where $n\geq 0$ is a fixed value that represents the budget for the number of DEEPTHINK activations and $k=1,\ldots$ represents the training episode count.  As in \citep{sootla2022saute}, we introduce a new variable $x_t$ that tracks the remaining number of activations: $x_t := n - \sum_{t\geq 0}{1}(\mathfrak{g}(s_t))$ where the variable
$x_t$ is now treated as the new state variable which is a component in an augmented state space $\cX:=\cS\times\mathbb{N}$. We introduce the associated reward function $\widetilde{R}_S:\cX\times{\boldsymbol{\cA}}\to\cP(D)$ and the probability transition  function $\widetilde{P}:\cX\times{\cA}\times\cX\to[0,1]$ whose state space input is now replaced by $\cX$ and the {\fontfamily{cmss}\selectfont Switcher} value function for the MDP $\langle \cS,\left(\cA,\cA_S\right),\tilde{P},\tilde{R}_S,\gamma\rangle$. We now prove LONDI-B ensures maximal performance for a given DEEPTHINK call budget.

\begin{theorem} \label{thm:optimal_policy_budget} In 
the budgeted  problem,
% the following statements hold:  
% \newline i) The Bellman equation is satisfied. 
% that is to say $\tilde{v}^{{\pi},\mathfrak{g}}_S( \bmz) = \tilde{T}\tilde{v}^{{\pi},\mathfrak{g}}_S( \bmz) := \max\limits_{\bma \in {\cA}}\widetilde R( \bmz, \bma)  + \gamma \mathbb{E}\left[ \tilde{v}^{{\pi},\mathfrak{g}}_S( \bmz')\right]$. 
% \newline ii) 
for any $\widetilde{v}:\cX\to \mathbb{R}$, the solution of $\widetilde{\cG}$ is given by $
\underset{k\to\infty}{\lim}\tilde{T}_S^k\widetilde{v}^{{\pi}}=\underset{\mathfrak{g}}{\max}\;\widetilde v^{{\pi},\mathfrak{g}}$, where {\fontfamily{cmss}\selectfont Switcher}'s optimal policy takes the Markovian form $\widetilde{\mathfrak{g}}(\cdot | y)$ for any $y\in\cX$. 
% where $({\pi}^{{\ast}},\mathfrak{g}^{\ast})$is a stable policy profile of $\widetilde \cG$. 
% and $\tilde{T}$ is the Bellman operator of $\widetilde \cG$.
\end{theorem}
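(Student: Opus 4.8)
The plan is to reduce the budgeted problem $\widetilde{\cG}$ to an \emph{unconstrained} MDP over the augmented state space $\cX=\cS\times\mathbb{N}$ and then to transport the argument of Theorem~\ref{theorem:existence} to this enlarged MDP essentially verbatim. The key observation is that the remaining-budget variable $x_t=n-\sum_{t\geq 0}1(\mathfrak{g}(s_t))$ renders the constraint $n-\sum_{j}\sum_{t_j}1(\mathfrak{g}(\cdot|s_{t_j}))\geq 0$ \emph{Markovian}: once $x$ is a state component, the feasibility of an activation depends only on the current augmented state $y=(s,x)$ and not on the past. I would encode the constraint directly into the dynamics by declaring that at any $y=(s,0)$ the switching action $g=1$ is infeasible (equivalently $\mathcal{M}^{\mathfrak{g},{\pi}^{\rm DEEP}}$ returns $-\infty$ there), while for $x\geq 1$ an activation decrements the budget, $x\mapsto x-1$, a rule absorbed into $\widetilde{P}$. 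Because the budget index is bounded by $n$, only states with $x\in\{0,\dots,n\}$ are reachable, so the augmentation is effectively finite in its new coordinate.

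First I would define the augmented switching operator mirroring $T_S$,
\begin{align*}
\widetilde{T}_S\widetilde{v}:=\max\Big\{\mathcal{M}^{\mathfrak{g},{\pi}^{\rm DEEP}}\widetilde{Q}_S,\;\underset{{a}\in\cA}{\max}\Big[\widetilde{R}_S+\gamma\sum_{y'\in\cX}\widetilde{P}(y';\cdot)\widetilde{v}(y')\Big]\Big\},
\end{align*}
where $\mathcal{M}$ carries the boundary convention at $x=0$ above. I would then show $\widetilde{T}_S$ is a $\gamma$-contraction on the bounded functions on $\cX$ under the sup-norm. The continuation branch is the usual $\gamma$-contractive Bellman map; the intervention branch, once $\widetilde{Q}_S$ is expanded as $\widetilde{R}_S(\cdot,{\pi}^{\rm DEEP})+\gamma\sum_{y'}\widetilde{P}(y';\cdot)\widetilde{v}(y')-c$, is \emph{also} affine in $\widetilde v$ through the factor $\gamma\widetilde P$ and hence $\gamma$-contractive; and the pointwise maximum of two $\gamma$-contractions is a $\gamma$-contraction because $|\max\{a_1,b_1\}-\max\{a_2,b_2\}|\leq\max\{|a_1-a_2|,|b_1-b_2|\}$. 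Uniform boundedness of the iterates is guaranteed since the per-step reward is bounded, $\gamma<1$, and the total discounted cost is capped at $n|c|$. By the Banach fixed-point theorem $\widetilde{T}_S$ admits a unique fixed point to which $\widetilde{T}_S^k\widetilde{v}^{\pi}$ converges geometrically from any initial $\widetilde v:\cX\to\mathbb{R}$, and I would identify this fixed point with $\max_{\mathfrak{g}}\widetilde{v}^{{\pi},\mathfrak{g}}$ through the same verification (optimality) argument used for Theorem~\ref{theorem:existence}.

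The Markovian form of the optimal policy is then immediate: $\langle\cS,(\cA,\cA_S),\widetilde{P},\widetilde{R}_S,\gamma\rangle$ is a bona fide MDP on $\cX$, so standard MDP theory yields a stationary policy depending on the history only through the current augmented state $y=(s,x)$, whence $\widetilde{\mathfrak{g}}(\cdot|y)$ suffices, with the activation rule inherited from Proposition~\ref{prop:switching_times} evaluated at $\widetilde{Q}_S$. The step I expect to be the main obstacle is establishing that the constraint holds \emph{almost surely} rather than merely in expectation: this is precisely what distinguishes the state-augmentation construction from a soft-penalty relaxation, and it hinges on the hard infeasibility convention at $x=0$ forcing $g=0$ on every trajectory once the budget is exhausted. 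Verifying that this convention preserves the measurability and monotonicity needed for the obstacle operator (so that the contraction and verification arguments remain valid across the $x=0$ boundary) is the delicate technical point of the proof.
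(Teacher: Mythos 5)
Your proposal is correct and follows essentially the same route as the paper: treat the remaining budget as a state variable on the augmented space $\cX=\cS\times\mathbb{N}$ (exactly the construction the paper sets up in Sec.~\ref{sec:londi_budget}, following Sootla et al.), so the constrained problem becomes an unconstrained switching MDP to which the Theorem~\ref{theorem:existence} contraction/dynamic-programming machinery applies, yielding a Markovian optimal policy in $y=(s,x)$. The only difference is one of packaging: the paper's proof is a two-line appeal to Theorem~\ref{theorem:existence} together with Theorem~2 of the Saut\'e RL paper, whereas you inline the content of that citation (the augmented Bellman operator, its $\gamma$-contraction, the boundary convention at $x=0$), which is a sound and more self-contained rendering of the same argument.
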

Theorem \ref{thm:optimal_policy_budget}  shows LONDI converges under standard assumptions to the solution of {\fontfamily{cmss}\selectfont Switcher}'s problem when {\fontfamily{cmss}\selectfont Switcher} faces a DEEPTHINK call budget constraint.
\section{Algorithm}
We now describe the methodology for the LONDI framework. At a state $s_t$, the system initially checks the switch state $m_t$. If the switch is currently off, $m_t=0$, the {\fontfamily{cmss}\selectfont Switcher} agent is applied to determine whether DEEPTHINK is needed. Otherwise $m_t=1$, the structure considers the switching probability $p_i$\footnote{The probability here is a Bernoulli distribution $p(x)=p^x(1-p)^{1-x}$ and the parameter $p$ and $q=1-p$ of the distribution is adjustable based on the difficulty of the environment. Considering that the DEEPTHINL model in our setting (Flan model) cannot undertake long step planning, the switching probability $p_i$ is applied ensures the possibility of sustained activation of the DEEPTHINK model, which improves the performance in complex and sparse reward tasks. $p_i(\cdot|s_t)$ means the value of $p_i$ at state $s_t$, whose value is sampled from the Bernoulli distribution.}. If $p_i(\cdot|s_t)=1$, the switch keeps the same and the structure directly utilises DEEPTHINK to generate action. If $p_i(\cdot|s_t)=0$, the structure activates {\fontfamily{cmss}\selectfont Switcher} agent to decide whether to turn on the switch again. If the $\mathsf{Switcher}$ module is used, it considers the state information $s_t$ processed by the encoder and samples the action according to its discrete policy $g_t\sim\mathfrak{g}:\mathcal{S}\rightarrow{0,1}$. If the {\fontfamily{cmss}\selectfont Switcher} output 0, $g_t=0$, the switch is off, $m_t=0$, and the QUICK model is called to generate action and interact with the environment. Otherwise $g_t=1$, the switch is on, $m_t=1$, and the DEEPTHINK model is activated to generate action. The trajectories of the process are stored in a replay buffer for further training. To further explain the work flow of LONDI, a schematic is placed in Appendix \ref{sec:schematic}. In the budget-constrained variant LONDI-B, the key distinction lies in the replacement of the cost function with a parameter budget $n$, effectively limiting the computational resources allotted for the task. The switching control policy employed by LONDI-B takes into account the remaining computational budget, adjusting its decision-making process accordingly by: $g_{t}\sim\mathfrak{g}(\cdot|s_{t},n)$. If the switching control mechanism invokes operations that surpass the predetermined budget limitations, the reward in and after that step will be reduced by a budget penalty $n_p$ , discouraging such over-expenditure of resources. Instead of completely prohibiting the agent from accessing the DEEPTHINK model after exhausting budget, we apply this additional penalty $n_p$ approach because it allows the agent to explore and consider information from later stages. If a particular point turns out to be highly important despite incurring a negative reward initially, the agent can strategically allocate a portion of its budget to access that point during subsequent training iterations. By applying a penalty rather than an outright restriction, the agent retains the flexibility to reevaluate and potentially utilize information from points that may become more valuable in later stages of the process.
\begin{algorithm}[!ht]
    \caption{\textbf{L}anguage \textbf{O}ptimising \textbf{N}etwork \textbf{Di}stribution (LONDI)}
    \label{alg:LONDI}
    \renewcommand{\algorithmicrequire}{\textbf{Input:}}
    \renewcommand{\algorithmicensure}{\textbf{Output:}}
    \begin{algorithmic}
        \REQUIRE QUICK policy $\pi^{\rm QUICK}$, DEEPTHINK policy $\pi^{\rm DEEP}$, Switching Control Policy $g$, learning algorithm $\Delta^{g}$, experience buffer $\textbf{B}$, switching probability $p_{i}$, switch state $m_{t}$ %%input
        \ENSURE Optimised policy $g$    %%output
        \STATE Initialise $g$, $p_{i}$, $m_{t}$
        \WHILE{not done}
            \FOR{$t = 1, T$}
                \STATE Given environment state $s_{t}$ evaluate $g_{t}\sim\mathfrak{g}(\cdot|s_{t})$
                \IF{$m_{t} > 0$}
                    \IF{$p_{i}(\cdot|s_{t}) = 1$}
                        \STATE Sample action $a_{t}$ using DEEPTHINK policy $\pi^{\rm DEEP}$ 
                    \ELSE
                        \IF{$g_{t} = 1$}
                            \STATE Sample action $a_{t}$ using DEEPTHINK policy $\pi^{\rm DEEP}$
                        \ELSE
                            \STATE Sample action $a_{t}$ using QUICK policy $\pi^{\rm QUICK}$, $m_{t} = 0$ 
                        \ENDIF
                    \ENDIF
                \ELSIF{$g_{t} = 1$}
                    \STATE Sample action $a_{t}$ using DEEPTHINK policy $\pi^{\rm DEEP}$, $m_{t} += 1$
                \ELSE
                    \STATE Sample action $a_{t}$ using QUICK policy $\pi^{\rm QUICK}$ 
                \ENDIF
                \STATE Apply $a_{t}$ to environment to obtain $s_{t+1}$, $\tau_{t+1}$ and $r_{t+1}$
                \STATE Store ($s_{t},$, $g_{t}$, $r_{t+1}$, $s_{t+1}$) in $\textbf{B}$ 
            \ENDFOR
            \FOR{Epochs and Batch numbers}
                \STATE Sample $\textbf{B}$ to obtain ($s_{t}$,$g_{t}$,$r_{t+1}$,$s_{t+1}$) and update $\mathfrak{g}$ with $\Delta^{g}$    
            \ENDFOR
        \ENDWHILE    
    \end{algorithmic}
\end{algorithm}
\begin{algorithm}[!ht]
    \caption{\textbf{L}anguage \textbf{O}ptimising \textbf{N}etwork \textbf{Di}stribution-\textbf{B}udget (LONDI-B)} %\yali{why this algo do not inclue ifnormation of $c$?}}
    \label{alg:LONDI-B}
    \renewcommand{\algorithmicrequire}{\textbf{Input:}}
    \renewcommand{\algorithmicensure}{\textbf{Output:}}
    \begin{algorithmic}
        \REQUIRE QUICK policy $\pi^{\rm QUICK}$, DEEPTHINK policy $\pi^{\rm DEEP}$, Switching Control Policy $g$, learning algorithm $\Delta^{g}$, experience buffer $\textbf{B}$, switching probability $p_{i}$, switch state $m_{t}$, switch budget $n$, switch budget limit $n_0$, switch budget penalty $n_p$, switch cost $c$ %%input
        \ENSURE Optimised policy $g$    %%output
        \STATE Initialise $g$, $p_{i}$
        \FOR{$t = 1, T$}
            \STATE $m_{t}=0, n=n_0$, 
            \WHILE{not done}
                \STATE Given environment state $s_{t}$ and switch budget $n$, evaluate $g_{t}\sim\mathfrak{g}(\cdot|s_{t},n)$
                \IF{$m_{t} > 0$}
                    \IF{$p_{i}(\cdot|s_{t}) = 1$}
                        \STATE Sample action $a_{t}$ using DEEPTHINK policy $\pi^{\rm DEEP}$
                    \ELSE
                        \IF{$g_{t} = 1$}
                            \STATE Sample action $a_{t}$ using DEEPTHINK policy $\pi^{\rm DEEP}$, $n-=1$
                        \ELSE
                            \STATE Sample action $a_{t}$ using QUICK policy $\pi^{\rm QUICK}$, $m_{t} = 0$
                        \ENDIF
                    \ENDIF
                \ELSIF{$g_{t} = 1$}
                    \STATE Sample action $a_{t}$ using DEEPTHINK policy $\pi^{\rm DEEP}$, $m_{t} += 1$, $n-=1$
                \ELSE
                    \STATE Sample action $a_{t}$ using QUICK policy $\pi^{\rm QUICK}$ 
                \ENDIF
                \STATE Apply $a_{t}$ to environment to obtain $s_{t+1}$, $\tau_{t+1}$ and $r_{t+1}$
                \IF{$g_{t} = 1$}
                    \STATE $r_{t+1}-=c$
                \ENDIF
                \IF{$n<0$}
                    \STATE $r_{t+1}-=n_p$
                \ENDIF
                \STATE Store ($s_{t},$, $g_{t}$, $r_{t+1}$, $s_{t+1}$,$n$) in $\textbf{B}$ 
            \ENDWHILE    
            \FOR{Epochs and Batch numbers}
                \STATE Sample $\textbf{B}$ to obtain ($s_{t}$,$g_{t}$,$r_{t+1}$,$s_{t+1}$,$n$) and update $\mathfrak{g}$ with $\Delta^{g}$   
            \ENDFOR
        \ENDFOR
    \end{algorithmic}
\end{algorithm}
\section{Experiments}\label{sec:exp_main}
% \DM{can you also mention the results presented in Table \ref{tab:t2} (and just refer the reader to it so they know it's there) (xun: I've refer as point 4)}
We conducted a series of experiments to test whether LONDI: \textbf{1.} Solves complex interactive tasks while reducing the computational cost. \textbf{2.} Works within the budget limit. \textbf{3.} Develops the ability to optimise its utilisation of DEEPTHINK within a specified budget for DEEPTHINK calls. \textbf{4.} Is plug \& play, namely robust and consistent with different components. 
We use ScienceWorld \cite{wang2022scienceworld} and BabyAI-Text \cite{carta2023grounding} environments to test the performance of LONDI.  We employ SAC \citep{haarnoja2018soft} to learn the control policy for switching. The presented plots display the average results obtained from $5$ different seeds \footnote{The standard deviation across different seeds are shown in the tables and charts.}. 
% \DM{Can we have an experiment where QUICK completely fails but with our framework it solves the task (with some DEEPTHINK calls)}
% 
% \DM{Need to do at least one comparison with QUICK DEEPTHINK}
% 
% \DM{Need an experiment where we increase the difficulty (e.g. maze/gridworld environment) and show we use DEEPTHINK more as the difficulty increases e.g. plot difficulty on x-axis and \# of DEEPTHINK calls on y axis (see sec. 6.1 of the mansa paper)}
% \begin{itemize}
%     \item Ablation: replacing Switcher switching agent with random
%     \item Heat map experiment showing where standard NN is activated e.g. intersection experiment where there is a complex set of states.
%     \item Performance experiments (2 environments)
%     \item Budget experiments: for this, we have a fixed budget on the number of standard NN calls
%     \item Ablation: replace one language model with another  
%     \item multi-modal experiment
% \end{itemize}

% \subsection{Evaluation Setup}
We evaluated the effectiveness of LONDI in two environments, namely 
Scienceworld and BabyAI-Text. Additional information regarding experimental setups is shown in \ref{sec:imp details}. \hspace*{\fill} \\
\textbf{ScienceWorld.} The ScienceWorld environment \cite{wang2022scienceworld} simulates a residential setting comprising 10 interconnected areas with a diverse range of up to 200 objects, including devices, instruments, plants/animals, electrical components, substances, and containers, as well as common environmental items like furniture, books, and paintings. The action space in the ScienceWorld environment consists of 25 high-level actions, encompassing both science-specific actions and general actions. The agent can only observe the information of its current area. For different tasks, the agent needs to combine high-level actions and objects into applicable actions and receive periodic rewards if they move towards the goal. A sample illustration of one task: \textit{Create a circuit} is displayed in Figure \ref{fig_sw}.  
% \hspace*{\fill} 
\\
\begin{wrapfigure}{r}{4cm}
% {0.25\textwidth}
    \centering
    \includegraphics[width=0.25\textwidth,height=0.17\textwidth]{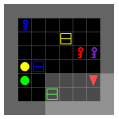}
    \caption{An illustration of one BabyAI task, PutNextLocal:"put the blue key next to the green ball". The shadow area represents the observable space of the agent.}
    \label{babyai_fig}
    \vspace{-.5cm}
\end{wrapfigure} 
{\textbf{BabyAI-Text.}  BabyAI \cite{chevalierboisvert2019babyai} is a 2D grid-based simulation environment that offers tasks of increasing complexity. 
The environment has various objects and the agent can pick up, drop, and move objects, and doors can be unlocked using keys of matching colors, which may be concealed within boxes. The agent's field of vision is limited to a 7x7 grid, and it cannot see beyond walls or closed doors. The available actions for the agent include moving forward, turning left or right, opening doors or boxes, picking up items, dropping items, and signaling completion. The agent can only hold one item at a time. The objective is to reach the goal state as quickly as possible, with the goal state being assigned a reward that diminishes over time. In this experiment, we use the modified textual version of BabyAI proposed by \cite{carta2023grounding}. One example task is shown in Figure \ref{babyai_fig}. 

%\textbf{Overcooked.} This environment is adapted from the popular video game Overcooked. The agent in this game can move in four directions: North, East, South, and West. It also has the ability to pick up food items, chop them on the cutting board, and place the chopped food on plates. The objective is to efficiently deliver soups. To maximize rewards, the agents must dynamically allocate tasks and coordinate effectively.}

We use a pre-trained Flan-T5 small model as our QUICK model and a pre-trained Flan-T5 large model \citep{chung2022scaling} as our DEEPTHINK model in ScienceWorld environment. For BabyAI-Text, We apply GFlan small and large \citep{carta2023grounding} as our QUICK and DEEPTHINK model, which builds upon the LLM Flan-T5 model as the basis for its action policy and optimises it using the PPO algorithm. The baselines are SWIFTSAGE, FrugalGPT and Probabilistic policy. The details of the baselines are shown in Appendix \ref{sec:baselines}. Ablation studies are displayed in Appendix \ref{sec:abl_results} to verify LONDI is robust to different components and the effectiveness of switching control structure. 

\subsection{Results and Analysis} \label{results_main}
\textbf{Switching cost variation.} To evaluate the performance of LONDI, we verified it with different cost values on different benchmarks. The results indicate that the cost $c$ enables the resource/performance trade-off to be carefully calibrated by LONDI. With small cost, LONDI is willing to activate DEEPTHINK more and achieve higher reward. But when the cost is expensive, LONDI becomes extremely economical and rarely use DEEPTHINK model. Therefore, LONDI in this situation uses QUICK model mostly and the reward is reduced. The results shown in the Tables \ref{tab:dif_cos_sw} and \ref{tab:dif_cos_babyai} suggest that as the computational cost increases, LONDI's performance declines, implying that higher costs lead to fewer invocations of the resource-intensive DEEPTHINK model.
\begin{minipage}{\textwidth}
\begin{minipage}[t]{0.48\textwidth}
\makeatletter\def\@captype{table}
\begin{tabular}{cc}
    \toprule
    Model     & Reward      \\
    \midrule
    DeepTHINK only & 77.6 $\pm$ 2.3    \\
    LONDI (cost=0.1)     & 76.5$\pm$3.4     \\
    LONDI (cost=0.2)     & 73.3$\pm$3.7       \\
    LONDI (cost=0.3)     & 49.6$\pm$2.6       \\
    LONDI (cost=0.4)     & 43.3$\pm$2.0      \\
    QUICK only    & 43.2$\pm$1.7        \\
    \bottomrule
\end{tabular}
\caption{Performance of LONDI on ScienceWorld task: \textit{Identify Longest-then-shortest-lived animal} with different cost (normalized)}
\label{tab:dif_cos_sw}
\end{minipage}
\begin{minipage}[t]{0.48\textwidth}
\makeatletter\def\@captype{table}
\begin{tabular}{ccc}
\toprule  
Model&Reward&Success Rate\\
\midrule  
DEEPTHINK only&0.96$\pm$0.02&0.87$\pm$0.05\\
LONDI (cost=0.05)&0.91$\pm$0.03&0.79$\pm$0.05\\
LONDI (cost=0.15)&0.82$\pm$0.05&0.72$\pm$0.07\\
LONDI(cost=0.25)&0.69$\pm$0.05&0.51 $\pm$0.08\\
LONDI (cost=0.35)&0.51$\pm$0.03&0.36$\pm$0.04\\
QUICK only&0.49$\pm$0.01&0.34$\pm$0.02\\
\bottomrule 
\end{tabular}
\caption{Performance of LONDI on BabyAI-Text with mixed tasks. The cost and reward are normalised values.}
\label{tab:dif_cos_babyai}
\end{minipage}
\end{minipage}

\textbf{Budget version of LONDI.} LONDI-B, the budget version of LONDI, possesses a more straightforward adjustable parameter in comparison. This parameter directly constrains the utilization of the DEEPTHINK model, thereby limiting the computational expenses of the module. To assess the computational resource requirements, we measure the average performance of the budget version LONDI-B across five evaluation episodes and monitor the GPU usage. The areas under the GPU usage curve are calculated based on numpy function. The results are shown in Table \ref{cost_main}. The DEEPTHINK calls in all LONDI-B structures remain within the specified budget. With a higher budget, the DEEPTHINK module is activated more frequently and the performance increases correspondingly. Note that when the budget is set to 2, the performance of LONDI is comparable to the no-limit version (DEEPTHINK only), whereas the DEEPTHINK activation is approximately two-fifths compared to the DEEPTHINK only version. The computation metrics AUC indicate that LONDI-B significantly decreases the computational resource usage compared with only using the large language model. Specifically, LONDI with a budget of achieves 90$\%$ performance compared with DEEPTHINK only but calls the DEEPTHINK model almost 60$\%$ fewer times.
\begin{table}[!htbp]\vspace{-.5cm}
\centering
\caption{Computational cost of LONDI-B on the ScienceWorld task: \textit{Identify Longest-then-shortest-lived animal}. DEEPTHINK Calls column represents the relative percentage of DEEPTHINK activations compared with DEEPTHINK only. AUC column gives the area under the GPU usage curve for the same number of timesteps.}
\hspace{-.4cm}\begin{tabular}{cccc}
\toprule  
Model&DEEPTHINK Calls&Reward& AUC\\
\midrule  
DEEPTHINK only&5$\pm$0.00(1)&77.6$\pm$2.3& 3130$\pm$5\\
LONDI-B (budget=5)&4.87$\pm$0.04 (0.97)&76.5$\pm$3.3& 2968$\pm$45\\
LONDI-B (budget=4)&3.96$\pm$0.03 (0.79)&75.3$\pm$3.5& 2843$\pm$53\\
LONDI-B (budget=3)&2.82$\pm$0.04 (0.56)&72.3$\pm$3.7& 2759$\pm$47\\
LONDI-B (budget=2)&1.72$\pm$0.05 (0.34)&70.6$\pm$3.2& 2671$\pm$43\\
LONDI-B (budget=1)&0.83$\pm$0.02 (0.17)&65.7$\pm$2.1& 2593$\pm$32\\
QUICK only&0$\pm$0.00&43.3$\pm$1.7& 2451$\pm$2\\
\bottomrule 
\end{tabular}
\label{cost_main}
\end{table}
\newline
\textbf{Comparison to Baselines.} To compare the performance of LONDI-B with baselines, we modify the QUICK and DEEPTHINK modules of SWIFTSAGE into the same setting, namely Flan-T5 small and large. The budget for calling DEEPTHINK model is set to be half of the calls of DEEPTHINK only in all tasks for LONDI-B and baselines. The results shown in Table \ref{tab:baseline_compare} indicate that LONDI-B's performance is marginally inferior to DEEPTHINK only, but outperforms SWIFTSAGE, FrugalGPT and probabilistic policy in both baselines. In addition, considering that LONDI-B only calls the DEEPTHINK model half than DEEPTHINK only, it significantly minimize the computational cost of the system. More detailed information can be found in Appendix \ref{tab:t2}. \footnote{Because the switching policy employed by SWIFTSAGE is based on predefined rules and empirical observations specific to the ScienceWorld environment, it becomes challenging for this approach to perform effectively when applied to other environments or domains that may have different dynamics and characteristics.} 
\begin{table}[!htbp]\vspace{-.5cm}
\centering
\caption{Average performance of LONDI-B and baselines on ScienceWorld and BabyAI mixed tasks. The reward value is normalized.}
\hspace{-.4cm}\begin{tabular}{ccc}
\toprule  
Model&ScienceWorld&BabyAI\\
\midrule  
DEEPTHINK only&0.87$\pm$0.02&0.96$\pm$0.02\\
LONDI-B &0.72$\pm$0.06&0.87$\pm$0.05\\
SWIFTSAGE &0.68$\pm$0.04&0.43$\pm$0.02\\
FrugalGPT&0.64$\pm$0.03&0.75$\pm$0.03\\
Probabilistic policy&0.44$\pm$0.08&0.41$\pm$0.07\\
QUICK only&0.33$\pm$0.01&0.34$\pm$0.01\\
\bottomrule 
\end{tabular}
\label{tab:baseline_compare}
\end{table}

\section{Conclusion}
In this paper, we introduce LONDI, a novel framework that leverages performance and computational cost by selectively activating LLM to cooperate with LM. LONDI combines LM and LLM in a way that enables LLM to support LM, thereby improving its performance. Simultaneously, LONDI can assist LLM in reducing computational and energy consumption. The budget variant of LONDI, known as LONDI-B, enhances the combination by offering an intuitive control facility for the user to limit the number of LLM calls . Through the translation of the switching problem into a Markov Decision Process, our theoretical framework demonstrates that LONDI maintains the convergence guarantees of RL while enhancing overall performance. In our empirical investigations, we conducted a comprehensive set of experiments in ScienceWorld and BabyAI-Text. Across these domains, LONDI shows performance improvements and computational cost decreases. 
% \clearpage
% \section{Impact Statement}
% The objective of this paper is to reduce the overall computational consumption of LLM in addressing complex interactive tasks. One possible application of our work is to assist the downstream of LLM to edge devices, whose computational resources and energy consumption are limited. With LONDI, the edge devices can apply an affordable small LLM locally as QUICK and consider the cloud server as DEEPTHINK. Therefore, the edge devices can communicate to the cloud server for support only at necessary states, which minimises the overall consumption of energy and computational resources. This approach enables edge devices to achieve improved real-time decision-making capabilities, ensuring bandwidth and energy efficiency. Additionally, it could enables individuals to have an enhanced AI experience that is accessible to all with edge devices, ensures privacy through local data processing, and allows for customization and personalisation.
\normalem 
\bibliography{main}
\bibliographystyle{abbrvnat}
\appendix
\onecolumn
{\centering \huge Appendix}
\section{Schematic of LONDI}\label{sec:schematic}
The schematic of LONDI is shown in Figure \ref{fig: flowdig}.
\begin{figure}[ht!]
    \centering
    \includegraphics[width=0.48
    \textwidth,height=0.55\textwidth]{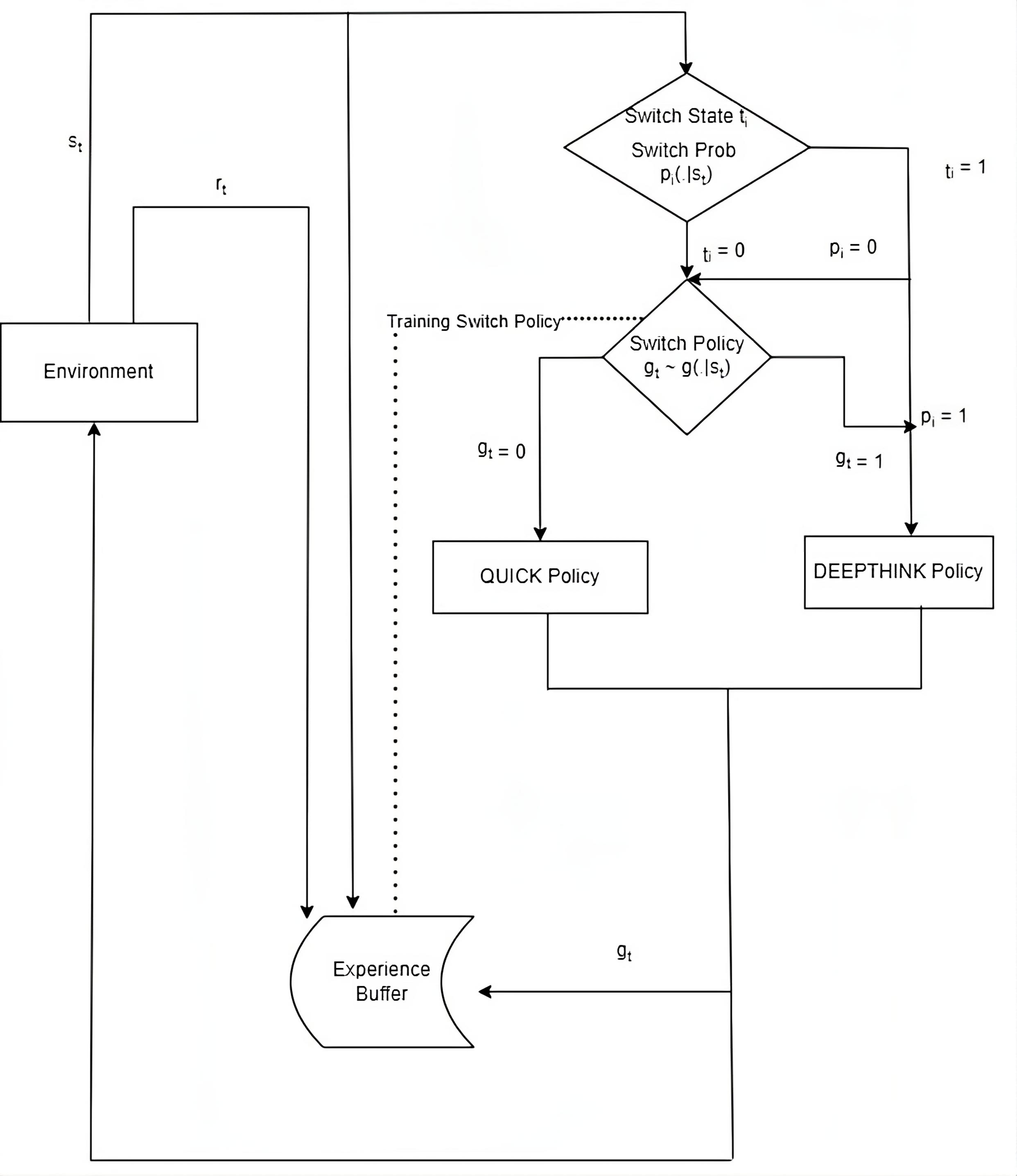}
    \caption{The schematic of LONDI. The Diamond represents the decision point, the square means a process or action, and the oval-like shape means data storage. The {\fontfamily{cmss}\selectfont Switcher} agent receives an environmental observation and makes a decision on which LLM module to utilise based on factors such as switch state, switch probability, and observation. The transition is then stored in the buffer for training the {\fontfamily{cmss}\selectfont Switcher} policy in subsequent iterations. 
    % \yali{this figure is not clear. The caption can be more detailed too.}
    }
    \label{fig: flowdig}
    \vspace{-.8cm}
    \end{figure} 
\newline
\section{Ablation Study}\label{sec:abl_results}
We conducted a series of ablation studies to verify the effectiveness of LONDI. In the subsequent analysis, we made adjustments to various elements of our framework to substantiate the following assertions:\newline
% \textbf{LONDI effectively adapts to different environments.} To validate that our structure is plug\& play to different environments, we test LONDI's performance on BabyAI-Text. We did not compare with SWIFTSAGE in this experiment since the heuristic prompt it used cannot be adapted to another environment. The results in Table \ref{per-babyai} indicate that LONDI can perform well in different environments with appropriate hyperparameters, which means LONDI has the potential to be applied in diverse environments.

% \newline
\textbf{LONDI effectively adapts to different components.} To validate the dynamic modification capability of LONDI in activating the DEEPTHINK module based on the QUICK module's capability, we replaced the QUICK module with both a random agent and a more proficient FLAN-T5-small model. As shown in Figure \ref{random} and Table \ref{dif_swift}, the results indicate that LONDI learns to activate the DEEPTHINK module according to the QUICK's performance. With a random QUICK module and a lower-performance QUICK module, LONDI can still achieve similar performance by activating the DEEPTHINK module more frequently. With lower performance DEEPTHINK module Flan-T5-small, LONDI shows the same tendency with various costs, which indicates that LONDI is a plug-in structure that can be used with distinct LLMs by only adjusting a few hyperparameters. \newline
\begin{figure}[htbp]\vspace{-.1cm}
    \centering
    \includegraphics[width=0.4\textwidth,height=0.23\textwidth]{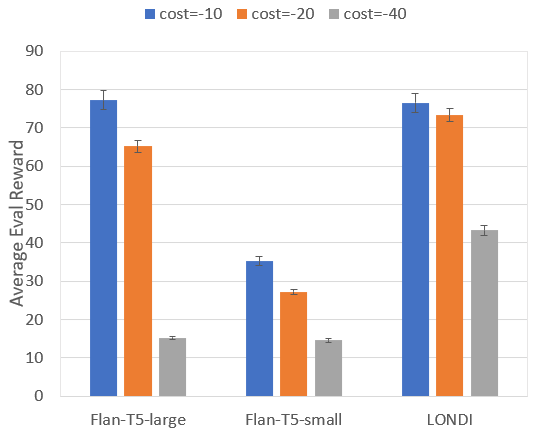}
    \caption{Performance of LONDI with a random agent as QUICK module on ScienceWorld task: \textit{Identify Longest-then-shortest-lived animal}. The Flan-T5-large bar represents Flan-T5-large as the DEEPTHINK module. The Flan-T5-small bar represents Flan-T5-small as the QUICK module.}
    \label{random}
    \vspace{-.7cm}
    \end{figure} 
\newline

\begin{table}[!htbp]
\caption{Performance of LONDI with different QUICK modules on ScienceWorld task: \textit{Create a circuit}. LONDI(L) means we modify the QUICK module to another FLAN-T5-small model which has a longer training length on this task.}
\vspace{3mm}
\centering\begin{tabular}{ccc}
\toprule  
Model& Reward& Rel. DEEPTHINK Calls\\
\midrule  
DEEPTHINK only& 70$\pm$2.0& 1$\pm$0.00\\
LONDI(L)& 56$\pm$3.5& 0.3$\pm$0.02\\
LONDI& 52$\pm$3.1& 0.5$\pm$0.04\\
QUICK only& 9$\pm$0.0& 0$\pm$0.00\\
FLAN-T5-small-L& 30$\pm$2.3& 0$\pm$0.00\\
\bottomrule 
\end{tabular}
\label{dif_swift}
\end{table}
\textbf{Switching Controls are important.}
A central component of LONDI is its switching control mechanism which determines when to activate the DEEPTHINK model. In particular, the switching control mechanism allows the {\fontfamily{cmss}\selectfont Switcher} agent to learn to activate the DEEPTHINK model only at states where it is needed to drive higher performance. To evaluate the importance of the switching control component and the effect of budget constrain, we compared the performance of LONDI-B with a variant of LONDI-B in which the switching control mechanism is replaced with probabilistic policy. Observe that activating the DEEPTHINK model at all states degenerates the method to DEEPTHINK and similarly, never activating the DEEPTHINK degenerates the framework to QUICK. Table \ref{dif_budget_random} shows the comparison of the performances of the variants under different budgets. We examined the performance of the variants of LONDI-B on the task called identifying the longest-lived animal. The results indicate that having the switching control component and hence, the ability to learn an optimal switching control in LONDI-B produces a significantly better performance compared to simply activating DEEPTHINK with Bernoulli distribution while LONDI-B uses fewer DEEPTHINK calls.
\begin{table}[!htbp]
\caption{The performance of LONDI-B and a variant of LONDI-B with a random {\fontfamily{cmss}\selectfont Switcher} (Rand. {\fontfamily{cmss}\selectfont Switcher}) agent with different budgets on ScienceWorld task: \textit{Identify Longest-lived animal}. Data in blue and brown are related to LONDI-B and the random variant LONDI-B resp. The budget usage column represents the DEEPTHINK calls of LONDI under the different budget settings, where the budget usage of variant LONDI-B is always equal to the setting number. The average row represents the mean value of structures whose budget greater than zero. LONDI-B outperforms the variant with a random agent for all budgets larger than zero.}
\vspace{3mm}
\centering
\begin{tabular}{cccc}
\toprule  
Budget&\textcolor{blue}{LONDI-B}&\textcolor{brown}{Probabilistic policy}&Budget Usage\\
\midrule  
No limit&\textcolor{blue}{83.2$\pm$2.1}&\textcolor{brown}{69.3$\pm$7.2}&\textcolor{blue}{3.93$\pm$0.04}$\backslash$\textcolor{brown}{4.8$\pm$0.42}\\
budget=4&\textcolor{blue}{81.6$\pm$4.2}&\textcolor{brown}{68.2$\pm$6.9}&\textcolor{blue}{3.82$\pm$0.06}$\backslash$\textcolor{brown}{4$\pm$0.00}\\
budget=3&\textcolor{blue}{77.6$\pm$5.3}&\textcolor{brown}{66.4$\pm$7.5}&\textcolor{blue}{2.76$\pm$0.07}$\backslash$\textcolor{brown}{3$\pm$0.00}\\
budget=2&\textcolor{blue}{75.8$\pm$6.1}&\textcolor{brown}{59.7$\pm$8.1}&\textcolor{blue}{1.87$\pm$0.09}$\backslash$\textcolor{brown}{2$\pm$0.00}\\
budget=1&\textcolor{blue}{66.4$\pm$3.2}&\textcolor{brown}{50.8$\pm$6.4}&\textcolor{blue}{0.82$\pm$0.04}$\backslash$\textcolor{brown}{1$\pm$0.00}\\
budget=0&\textcolor{blue}{42.3$\pm$1.6}&\textcolor{brown}{42.3$\pm$1.6}&\textcolor{blue}{0.02$\pm$0.01}$\backslash$\textcolor{brown}{0$\pm$0.00}\\
\textbf{Average}&\textcolor{blue}{\textbf{75.4$\pm$4.1}}&\textcolor{brown}{\textbf{61.2$\pm$6.1}}&\textcolor{blue}{\textbf{2.32$\pm$0.05}}$\backslash$\textcolor{brown}{\textbf{2.5$\pm$0.00}}\\
\bottomrule 
\end{tabular}
\label{dif_budget_random}
\vspace{-.5cm}
\end{table}
\section{Additional Results}
The detail results of all tasks \footnote{solvable by Flan-t5-large model} on the Scienceworld are shown in Table \ref{tab:t2}. The cost of LONDI and QUICK is the proportion compared to the AUV of the DEEPTHINK model in the same timesteps. The results indicated that LONDI facilitates the collaboration between QUICK and DEEPTHINK models. The structure performs slightly worse than DEEPTHINK but significantly better than QUICK. However, LONDI requires only slightly more computational resources than QUICK only, while consuming significantly less than DEEPTHINK only.
\begin{table}[!htbp]
\caption{Detailed results on the ScienceWorld benchmark across different tasks. The budget or allowance for invoking the computationally intensive DEEPTHINK model is set to be half of the number of times the DEEPTHINK model would be called if it were the sole model employed.}
\vspace{3mm}
\label{tab:t2}
\centering
\resizebox{\textwidth}{!}{\begin{tabular}{lcccccc}  
\hline
Task name&LONDI-B&QUICK&DEEPTHINK&LONDI-B cost&QUICK cost&DEEPTHINK cost\\ \hline
Find an animal&78$\pm$4.1&23$\pm$0.4&100$\pm$0.0&0.74$\pm$0.02&0.68$\pm$0.01&1\\ \hline
Find a living thing&75$\pm$2.9&20$\pm$0.2&100$\pm$0.0&0.76$\pm$0.01&0.7$\pm$0.01&1\\ \hline
Find a non-living thing&78$\pm$3.2&58$\pm$1.7&100$\pm$0.0&0.72$\pm$0.02&0.69$\pm$0.00&1\\ \hline
Find plant&89$\pm$3.1&34$\pm$1.2&100$\pm$0.0&0.78$\pm$0.03&0.64$\pm$0.00&1\\ \hline
Inclined planes(determine angle)&52$\pm$3.2&10$\pm$0.0&73$\pm$2.1&0.80$\pm$0.02&0.74$\pm$0.01&1\\ \hline
Friction(known surfaces)&64$\pm$4.1&38$\pm$1.4&73$\pm$2.2&0.85$\pm$0.02&0.65$\pm$0.01&1\\ \hline
Identify Longest-then-shortest-lived animal&72$\pm$3.7&43$\pm$1.6&78$\pm$2.3&0.85$\pm$0.03&0.78$\pm$0.00&1\\ \hline
Identify Longest-lived animal&76$\pm$6.1&42$\pm$1.6&83$\pm$2.1&0.79$\pm$0.02&0.72$\pm$0.00&1\\ \hline
Identify shortest-lived animal&87$\pm$3.2&50$\pm$1.8&100$\pm$0.0&0.83$\pm$0.02&0.74$\pm$0.01&1\\ \hline
Create a circuit&52$\pm$3.1&9$\pm$0.0&70$\pm$2.5&0.79$\pm$0.02&0.62$\pm$0.01&1\\ \hline
\end{tabular}} 
\end{table}
\section{Implementation Details}\label{sec:imp details}
\subsection{Hyperparameter Settings}
All hyperparameters used in our experiments are shown in the table below. The values included in square brackets indicate ranges of values that were used for performance tuning. All the training and evaluation is done on one NVIDIA A10 with 24GB memory. The training of LONDI with any version takes 8 hours under the setting of Table \ref{tab:hyper-param}. 
\begin{table}[!htbp]
\caption{Hyperparameter Setting of LONDI}
\vspace{3mm}
\label{tab:hyper-param}
\centering
\begin{tabular}{l|r}  
\hline
Clip Gradient Norm&1\\
$\gamma$&0.99\\
Learning rate&\num{1e-4}\\
Number of minibatches&4\\
Rollout length&128\\
Number of optimisation epochs&4\\
Optimisation algorithm&Adam\\
$\tau$&\num{5e-3}\\
$\epsilon$&\num{1e-8}\\
Encoder MLP layer&1\\
Encoder MLP hidden unit&256\\
Use Generalised Advantage Estimation&True\\
\hline
Coefficient of switch cost&[-1,-5,-10,-15,-20,-25,-30,-40]\\
Switch budget penalty&[-25,-45,-65]\\
Encoder output size&[4,8,16]\\
Switching probability&[0.1,0,3,0.5,0.7,0.9] \\
\hline
\end{tabular} 
\end{table}
\subsection{Training Details of Flan models}
Following the training setup of \cite{lin2023swiftsage} in ScienceWorld, we utilize flan-t5-large (783m) and flan-t5-small (77m) as the foundation, and fine-tuned them using the seq2seq action-prediction data (62k). In order to mitigate the potential bias arising from data imbalance in the sequence-to-sequence learning process, we employed a down-sampling technique, selectively reducing the representation of certain task types and actions, thereby curating a more balanced final dataset for the training phase. The training configs are consistent with \cite{lin2023swiftsage},  a learning rate of 1e-4 and batch size of 128 employed for training 500 steps. The detailed information of the dataset is shown in Figure \ref{dataset}.
\begin{figure}[htbp]\vspace{-.1cm}
    \centering
    \includegraphics[width=0.6\textwidth,height=0.6\textwidth]{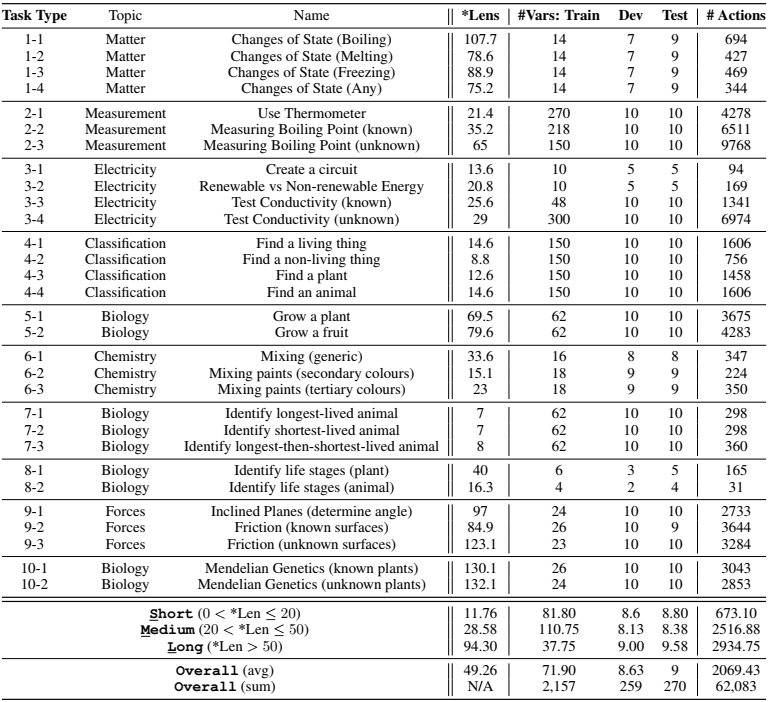}
    \caption{The statistics of ScienceWorld benchmark. \textit{*Len} represents the average length of the trajectories of oracle agents. Number of downsample variations are shown in each split. The rightmost column represents the quantity of data points utilized for the action-prediction seq2seq task during the training phase of Flan model.}
    \label{dataset}
    \vspace{-.7cm}
    \end{figure} 
\newline
\newline
For the training in BabyAI-Text, we follow the training framework of \cite{carta2023grounding}, which utilizes a Python library \textit{Lamorel} to enable the dispatching of calls to the deployed LLMs from a single line of code within the RL loop, requesting the probability of actions for all environments. Therefore, RL can call and communicate with all LLMs in parallel. In addition, \textit{Lamorel} help the update of LLMs using RL algorithm (i.e. PPO) loss. The hyperparameters of PPO are shown below. 
\begin{table}[!htbp]
\caption{Hyperparameter Setting of PPO in GFlan}
\vspace{3mm}
\label{tab:hyper-param-ppo}
\centering
\begin{tabular}{l|r}  
\hline
\textbf{Variables}&\textbf{Values}\\
\hline
Number of transitions collected between two updates &320 (8 environments ×40 steps in each environment)\\
Number of epochs per update&4\\
Batch size&32\\
Entropy loss coefficient&0.01\\
Value function loss coefficient&0.5\\
Discount factor&0.99\\
Learning rate&\num{1e-6}\\
$\lambda$ factor of the Generalized Advantage Estimator&0.99\\
Optimisation algorithm&Adam\\
Clipping parameter $\epsilon$&0.2\\
Maximum gradient norm&0.5\\
\hline
\end{tabular} 
\end{table}
\subsection{Prompt of DEEPTHINK model}
The prompt here is used by DEEPTHINK in ScienceWorld, the structure follows the setpup of \cite{lin2023swiftsage}, which has planning stage and grounding stage. The planning stage includes a concise summary of the task description and the sequence of previous actions, followed by posing five critical questions related to the current state. 
\newline
$\bullet$ “To complete the task, which objects do I need to collect? Please list them and their possible locations one by one.”
\newline
$\bullet$ “Are there any objects that have not been collected yet?”
\newline
$\bullet$ “To complete the task most efficiently, what are the important subgoals to achieve? Please list the subgoals one by one.”
\newline
$\bullet$  “Considering these subgoals, what have I already completed? And which subgoal should I focus on right now?”
\newline
$\bullet$ “Have I made any mistakes that might prevent me from efficiently completing the next subgoal? If any, how should I fix them?”
The grounding stage includes a comprehensive list of supported action types in a formal manner at first, complemented by remarks. The output of the planning stage are then given as advice. Recent action history of the past 10 time steps are also given. With these information, the LLM is required to focus on the next subgoal and generate a list of actions to achieve it. 
\subsection{Baselines}\label{sec:baselines}
We introduced all baseline methods and their experimental settings when we used to compare them
with the LONDI as follows.

\textbf{SWIFTSAGE}\cite{lin2023swiftsage} utilizes a rule-based method to switch the activation between DEEPTHINK and QUICK model: 1) There are five consecutive time steps with zero reward. 2) The QUICK’s prediction for the next action can result in a critical decision, such as giving the final answer for the experiment result.
3)  The QUICK’s prediction for the next action is invalid in the current environment or the observation of the action suggests that an exception is encountered. In this paper, we keep the DEEPTHINK and QUICK models of SWIFTSAGE consistent with those used in LONDI.

\textbf{FrugalGPT}\cite{chen2023frugalgpt} utilizes a cascade of LLMs and a score function to minimize the cost. The cascade tries with the smallest LLM first and the score function grades the response and query pair. If the score is higher than the threshold, the agent accept the answer. Otherwise, the cascade tries the larger LLM and the score function grades again until the answer meets the requirement or the largest LLM is used. In this experiment, we use the environment feedback as the score and set the threshold based on different tasks. In addition to the reward of environment, we added another reward that is proportional to the distance between the current location of agent and goal position. This was done to ensure more accurate feedback in alignment with the agent's actions. In this paper, the LLM cascade of FrugalGPT includes QUICK and DEEPTHINK models which are identical to those employed in LONDI.

\textbf{Probabilistic policy} is a policy that based on Bernoulli distribution \cite{BernoulliDistribution}: $P(x)=p^x(1-p)^{1-x}$. For the experiments, we configured the probability $p$ to be 0.5, implying that the values 0 and 1 had an equal chance of being selected. When the sample value of the distribution is 1, the policy decides to use the DEEPTHINK model. Otherwise, the policy use the QUICK model.  

\section{Limitations}\label{sec:limitation}
Although LONDI has shown adaptability to other environments, the trainable model free switcher limits its applicability to more resources constraint environments since the model-free algorithm requires large sample data for training which is not applicable in some real-world situation. One possible solution is to use model-based algorithms instead, which could alleviate sample efficiency problem. In addition, LONDI cannot be directly generalized to another unseen environments without training, which could reduce its application scope in real life. One possible approach is to combine reinforcement learning with causal inference to achieve better generalizability. 

\section{Impact Statement}\label{sec:impact state}
The objective of this paper is to reduce the overall computational consumption of LLM in addressing complex interactive tasks. One possible application of our work is to assist the downstream of LLM to edge devices, whose computational resources and energy consumption are limited. With LONDI, the edge devices can apply an affordable small LLM locally as QUICK and consider the cloud server as DEEPTHINK. Therefore, the edge devices can communicate to the cloud server for support only at necessary states, which minimises the overall consumption of energy and computational resources. This approach enables edge devices to achieve improved real-time decision-making capabilities, ensuring bandwidth and energy efficiency. Additionally, it could enables individuals to have an enhanced AI experience that is accessible to all with edge devices, ensures privacy through local data processing, and allows for customization and personalisation. 
\newpage
\section{Notation \& Assumptions}\label{sec:notation_appendix}

We assume that $\mathcal{S}$ is defined on a probability space $(\Omega,\mathcal{F},\mathbb{P})$ and any $s\in\mathcal{S}$ is measurable with respect
to the Borel $\sigma$-algebra associated with $\mathbb{R}^p$. We denote the $\sigma$-algebra of events generated by $\{s_t\}_{t\geq 0}$
by $\mathcal{F}_t\subset \mathcal{F}$. In what follows, we denote by $\left( \cY,\|\|\right)$ any finite normed vector space and by $\mathcal{H}$ the set of all measurable functions.  Where it will not cause confusion (and with a minor abuse of notation) for a given function $h$ we use the shorthand $h^{(\pi^{i},\pi^{-i})}(s)= h(s,\pi^i,\pi^{-i})\equiv\mathbb{E}_{\pi^i,\pi^{-i}}[h(s,a^i,a^{-i})]$.

The results of the paper are built under the following assumptions which are standard within RL and stochastic approximation methods:

\textbf{Assumption 1}
The stochastic process governing the system dynamics is ergodic, that is the process is stationary, and every invariant random variable of $\{s_t\}_{t\geq 0}$ is equal to a constant with probability $1$.

\textbf{Assumption 2}
The {\fontfamily{cmss}\selectfont Switcher}'s reward function $R_S$ is in $L_2$.

\textbf{Assumption 3}
For any positive scalar $c$, there exists a scalar $\mu_c$ such that for all $s\in\mathcal{S}$ and for any $t\in\mathbb{N}$ we have: $
    \mathbb{E}\left[1+\|s_t\|^c|s_0=s\right]\leq \mu_c(1+\|s\|^c)$.

\textbf{Assumption 4}
There exists scalars $C_1$ and $c_1$ such that  $|R(s,\cdot)|\leq C_2(1+\|s\|^{c_2})$ for some scalars $c_2$ and $C_2$ we have that: $
    \sum_{t=0}^\infty\left|\mathbb{E}\left[R(s_t,\cdot)|s_0=s\right]-\mathbb{E}[R(s_0,\cdot)]\right|\leq C_1C_2(1+\|s_t\|^{c_1c_2})$.

\textbf{Assumption 5}
There exists scalars $e$ and $E$ such that for any $s\in\mathcal{S}$ we have that: $
    |R(s,\cdot)|\leq E(1+\|s\|^e)$ .

\textbf{Assumption 6}
For any {\fontfamily{cmss}\selectfont Switcher} policy $\mathfrak{g}$, the total number of interventions is $K<\infty$.

% \newpage
\section{Proof of Technical Results}\label{sec:proofs_appendix}

We begin the analysis with some preliminary results and definitions required to prove our main results.

\begin{definition}{A.1}
Given a norm $\|\cdot\|$, an operator $T: \cY\to \cY$ is a contraction if there exists some constant $c\in[0,1[$ for which for any $J_1,J_2\in  \cY$ the following bound holds: $    \|TJ_1-TJ_2\|\leq c\|J_1-J_2\|$.
\end{definition}

\begin{definition}{A.2}
An operator $T: \cY\to  \cY$ is non-expansive if $\forall J_1,J_2\in  \cY$ the following bound holds: $    \|TJ_1-TJ_2\|\leq \|J_1-J_2\|$.
\end{definition}
% \begin{definition}{A.3}
% The \textbf{residual} of a vector $V\in \cY$ w.r.t the operator $T: \cY\to  \cY$ is:
% \begin{align}
%     \epsilon_T(V):= \|TV-V\|.
% \end{align}
% \end{definition}

\begin{lemma} \cite{mguni2019cutting} \label{max_lemma}
For any
$f: \cY\to\mathbb{R}: \cY\to\mathbb{R}$, we have that the following inequality holds:
\begin{align}
\left\|\underset{a\in \cY}{\max}\:f(a)-\underset{a\in \cY}{\max}\: g(a)\right\| \leq \underset{a\in \cY}{\max}\: \left\|f(a)-g(a)\right\|.    \label{lemma_1_basic_max_ineq}
\end{align}
\end{lemma}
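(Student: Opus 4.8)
The plan is to prove the inequality by establishing a two-sided scalar bound and then exploiting the symmetry between $f$ and $g$. Since the arguments of the outer norm are real numbers, $\|\cdot\|$ here reduces to the absolute value, so it suffices to bound both $\max_{a\in\cY} f(a)-\max_{a\in\cY} g(a)$ and its negation by the quantity $\max_{a\in\cY}\|f(a)-g(a)\|$ on the right.

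First I would fix a maximiser $a^\star\in\cY$ of $f$, i.e. $f(a^\star)=\max_{a\in\cY} f(a)$ (whose existence is the one point requiring care; see below). Using only that $\max_{a\in\cY} g(a)\geq g(a^\star)$, I would then write the chain
\begin{align*}
\max_{a\in\cY} f(a)-\max_{a\in\cY} g(a) = f(a^\star)-\max_{a\in\cY} g(a)\leq f(a^\star)-g(a^\star)\leq \|f(a^\star)-g(a^\star)\|\leq \max_{a\in\cY}\|f(a)-g(a)\|,
\end{align*}
which delivers one direction of the bound. The point is that replacing $\max_{a\in\cY} g(a)$ by the specific value $g(a^\star)$ can only decrease the subtracted term, after which each argument is evaluated at the common point $a^\star$ and is therefore dominated by the worst-case difference over $\cY$.

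Next, by exchanging the roles of $f$ and $g$ — the hypotheses are symmetric in the two functions — the identical argument applied to a maximiser of $g$ yields $\max_{a\in\cY} g(a)-\max_{a\in\cY} f(a)\leq \max_{a\in\cY}\|f(a)-g(a)\|$. Combining the two one-sided estimates and recalling that $\|x\|=\max\{x,-x\}$ for a scalar $x$, I obtain
\begin{align*}
\left\|\max_{a\in\cY} f(a)-\max_{a\in\cY} g(a)\right\|\leq \max_{a\in\cY}\|f(a)-g(a)\|,
\end{align*}
which is exactly the claimed inequality \eqref{lemma_1_basic_max_ineq}.

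The only delicate point — and the main obstacle — is the attainment of the maxima: the first line of the chain presumes that $f$, and later $g$, actually achieves its supremum on $\cY$. When $\cY$ is finite (as for the {\fontfamily{cmss}\selectfont Switcher}'s action set) this is automatic; in general one either assumes enough compactness and semicontinuity for maximisers to exist, or replaces each $\max$ by $\sup$ and repeats the argument using an approximate maximiser $a_\epsilon$ with $f(a_\epsilon)\geq \sup_{a\in\cY} f(a)-\epsilon$, obtaining the bound up to an additive $\epsilon$ and then letting $\epsilon\downarrow 0$. Either route leaves the combinatorial core of the argument untouched.
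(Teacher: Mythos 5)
Your proof is correct. The paper itself gives no proof of this lemma---it is imported by citation from \cite{mguni2019cutting}---and your argument (evaluate at a maximiser $a^\star$ of $f$, use $\max_{a\in\cY} g(a)\geq g(a^\star)$ to get the one-sided bound, then swap the roles of $f$ and $g$ and combine) is the standard proof of this fact, with the attainment-of-the-maximum issue handled appropriately for the finite action sets actually used by the {\fontfamily{cmss}\selectfont Switcher}.
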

% \begin{proof}
% We restate the proof given in \citep{mguni2019cutting}:
% \begin{align}
% f(a)&\leq \left\|f(a)-g(a)\right\|+g(a)\label{max_inequality_proof_start}
% \\\implies
% \underset{a\in \cY}{\max}f(a)&\leq \underset{a\in \cY}{\max}\{\left\|f(a)-g(a)\right\|+g(a)\}
% \leq \underset{a\in \cY}{\max}\left\|f(a)-g(a)\right\|+\underset{a\in \cY}{\max}\;g(a). \label{max_inequality}
% \end{align}
% Deducting $\underset{a\in \cY}{\max}\;g(a)$ from both sides of (\ref{max_inequality}) yields:
% \begin{align}
%     \underset{a\in \cY}{\max}f(a)-\underset{a\in \cY}{\max}g(a)\leq \underset{a\in \cY}{\max}\left\|f(a)-g(a)\right\|.\label{max_inequality_result_last}
% \end{align}
% After reversing the roles of $f$ and $g$ and redoing steps (\ref{max_inequality_proof_start}) - (\ref{max_inequality}), we deduce the desired result since the RHS of (\ref{max_inequality_result_last}) is unchanged.
% \end{proof}

\begin{lemma} {A.4}\citep{tsitsiklis1999optimal}\label{non_expansive_P}
The probability transition kernel $P$ is non-expansive so that if $\forall J_1,J_2\in  \cY$ the following holds: $    \|PJ_1-PJ_2\|\leq \|J_1-J_2\|$.
\end{lemma}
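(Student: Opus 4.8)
The plan is to reduce the two-function bound to a one-sided estimate on a single function and then establish that estimate via Jensen's inequality against the invariant distribution of the chain. Since the transition kernel acts linearly on functions through $(PJ)(s) = \sum_{s'\in\cS} P(s'|s)\,J(s')$, we have $PJ_1 - PJ_2 = P(J_1 - J_2)$; hence it suffices to prove $\|PJ\| \le \|J\|$ for every $J\in\cY$ and then apply this to $J := J_1 - J_2$. I would first flag that non-expansiveness cannot hold for the arbitrary norm on $\cY$ mentioned in the preliminaries — it is a statement about a \emph{specific} norm — and that the relevant norm here, matching \citep{tsitsiklis1999optimal}, is the $L_2$ norm weighted by the invariant measure of $\{s_t\}_{t\ge 0}$.

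Concretely, I would fix $\|J\|^2 = \sum_{s\in\cS}\mu(s)\,J(s)^2$, where $\mu$ denotes the (unique) invariant distribution of the chain. Assumption 1 (ergodicity and stationarity) guarantees that such a $\mu$ exists and is stationary, so that the invariance identity $\sum_{s\in\cS}\mu(s)\,P(s'|s) = \mu(s')$ holds for every $s'\in\cS$. This identity is the single structural fact the argument hinges on, and I would establish it from Assumption 1 before using it.

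The core of the proof is a pointwise application of Jensen's inequality. For each fixed $s$, the row $P(\cdot|s)$ is a probability distribution over $\cS$ and $x\mapsto x^2$ is convex, so
\[ (PJ)(s)^2 = \Big(\sum_{s'\in\cS} P(s'|s)\,J(s')\Big)^2 \le \sum_{s'\in\cS} P(s'|s)\,J(s')^2. \]
Multiplying by $\mu(s)$, summing over $s$, interchanging the order of summation, and invoking the invariance identity $\sum_{s}\mu(s)P(s'|s) = \mu(s')$ yields $\|PJ\|^2 \le \sum_{s'\in\cS}\mu(s')\,J(s')^2 = \|J\|^2$. Taking square roots and substituting $J = J_1 - J_2$ gives exactly $\|PJ_1 - PJ_2\| \le \|J_1 - J_2\|$, as claimed.

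I expect the main obstacle to be the measure-theoretic bookkeeping rather than the algebra: justifying existence, uniqueness, and stationarity of $\mu$ from Assumption 1, and justifying the interchange of summation/integration, which in the general (non-finite) case requires Tonelli's theorem together with the moment and growth bounds of Assumptions 3–5 to guarantee the relevant sums are finite. When $\cS$ is genuinely finite the interchange is immediate and the whole argument collapses to a short finite-sum manipulation. For completeness I would also note that the corresponding bound in the unweighted sup-norm $\|J\|_\infty = \max_{s}|J(s)|$ follows even more directly, since $|(PJ)(s)| \le \sum_{s'} P(s'|s)\,\|J\|_\infty = \|J\|_\infty$ by the triangle inequality and the fact that each row of $P$ sums to one, so the result is robust to whichever of these two standard norms the subsequent contraction argument for $T_S$ requires.
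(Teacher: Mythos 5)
The paper never proves this lemma: it is imported verbatim by citation from \citep{tsitsiklis1999optimal}, so there is no in-paper argument to compare against. Your reconstruction is correct and is essentially the proof in the cited source: linearity of $P$ reduces the claim to $\|PJ\|\le\|J\|$, and Jensen's inequality applied row-wise to the probability distribution $P(\cdot|s)$, combined with the invariance identity $\sum_{s}\mu(s)P(s'|s)=\mu(s')$, yields non-expansiveness in the $\mu$-weighted $L_2$ norm. Your flag about norm-dependence is not pedantry but a genuine correction to the statement as printed: the lemma asserts non-expansiveness for the unspecified norm on ``any finite normed vector space $\cY$'' from the paper's notation section, and that is false in general (a weighted norm whose weights are not invariant under $P$ can be expanded). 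What the appendix actually needs is the sup-norm version --- the Q-learning convergence argument explicitly passes to $\|\cdot\|_\infty$ --- and there your one-line remark, $|(PJ)(s)|\le\sum_{s'}P(s'|s)\,\|J\|_\infty=\|J\|_\infty$ since each row of $P$ sums to one, is the relevant and fully sufficient argument; the $L_2(\mu)$ machinery (existence and uniqueness of $\mu$ from Assumption 1, Tonelli for the summation interchange under Assumptions 3--5) is only required if one insists on matching the norm used in \citep{tsitsiklis1999optimal}. In short: your proof is sound, it faithfully reconstructs the cited source's argument, and your observation that the lemma must be read relative to a specific norm repairs an imprecision the paper itself leaves unaddressed.
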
 
% \begin{proof}
% The result is well-known e.g. \citep{tsitsiklis1999optimal}. We give a proof using the Tonelli-Fubini theorem and the iterated law of expectations, we have that:
% \begin{align*}
% &\|PJ\|^2=\mathbb{E}\left[(PJ)^2[s_0]\right]=\mathbb{E}\left(\left[\mathbb{E}\left[J[s_1]|s_0\right]\right)^2\right]
% \leq \mathbb{E}\left[\mathbb{E}\left[J^2[s_1]|s_0\right]\right] 
% = \mathbb{E}\left[J^2[s_1]\right]=\|J\|^2,
% \end{align*}
% where we have used Jensen's inequality to generate the inequality. This completes the proof.
% \end{proof}

\section*{Proof of Theorem \ref{theorem:existence}}
To prove Theorem \ref{theorem:existence}, we follow the proof of the convergence theorem in MANSA \cite{pmlr-v202-mguni23a} while making adaptations to the single agent environment with a single learner.  
\begin{proof}
The proof of the Theorem proceeds by first proving that the  {\fontfamily{cmss}\selectfont Switcher} agent's learning process, which involves switching controls converges. Recall, that the {\fontfamily{cmss}\selectfont Switcher} agent presides over an activation that deactivates ${\pi}^{\rm QUICK}$ and activates ${\pi}^{\rm DEEP}$.

% Prove that the solution to Markov Team games (that is games in which both players maximise \textit{identical objectives}) in which one of the players uses switching control is the limit point of a sequence of Bellman operators (acting on some test function)

% Therefore, the scheme of the proof is summarised with the following steps:
% \begin{itemize}
%     \item[\textbf{A)}] Prove that {\fontfamily{cmss}\selectfont Switcher}'s switching control policy converges to a solution of {\fontfamily{cmss}\selectfont Switcher}'s problem.
%     \item[\textbf{B)}] Prove that the MG $\mathcal{G}$ has a dual representation as a \textit{Markov Team Game} whose solution is obtained by computing the solution of a team Markov game. 
%     \item[\textbf{C)}]  
% Prove that all agents solve the same problem.
% \end{itemize}

We begin by recalling the definition of the intervention operator $(\mathcal{M}^{\mathfrak{g},{\pi}^{\rm DEEP}}$ for any $s\in\cS$ and for a given ${\pi}$: 
\begin{align}
(\mathcal{M}^{\mathfrak{g},{\pi}^{\rm DEEP}}Q_S(s,{a}|\cdot):=Q_S(s,{\pi}(s)|\cdot)-c
\end{align}

Secondly, recall that the Bellman operator for the game $\cG$ is given by:

\begin{align}
T_Sv_S(s_{\tau_k}):=\max\left\{(\mathcal{M}^{\mathfrak{g},{\pi}^{\rm DEEP}}Q_S(s_{\tau_k},{a}),\underset{{a}\in{\mathcal{A}}}{\max}\;\left[ R_S(s_{\tau_k},{a},g)+\gamma\sum_{s'\in\mathcal{S}}P(s';{a},s_{\tau_k})v_S(s')\right]\right\}\label{bellman_proof_start}
\end{align}

To prove (i) it suffices to prove that $T$ is a contraction operator. Thereafter, we use both results to prove the existence of a fixed point for $\cG$ as a limit point of a sequence generated by successively applying the Bellman operator to a test value function.   
% First let us recall that the \textit{intervention time} $\tau_k$ is defined recursively $\tau_k=\inf\{t>\tau_{k-1}|s_t\in A,\tau_k\in\mathcal{F}_t\}$ where $A=\{s\in \mathcal{S},m\in M|\mathfrak{g}(m|s_t)>0\}$.
Therefore our next result shows that the following bounds hold:
\begin{lemma}\label{lemma:bellman_contraction}
The Bellman operator $T$ is a contraction so that the following bound holds: $
\left\|T\psi-T\psi'\right\|\leq \gamma\left\|\psi-\psi'\right\|$.
\end{lemma}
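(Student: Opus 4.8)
The plan is to show that $T_S$ inherits $\gamma$-contractivity from each of the two branches inside its defining maximum, using the max inequality (Lemma \ref{max_lemma}) to reduce to a branchwise comparison and the non-expansiveness of the transition kernel (Lemma \ref{non_expansive_P}) to supply the $\gamma$ factor. First I would treat the outer maximum in \eqref{bellman_proof_start} as a maximum over the two-element index set consisting of the intervention branch and the continuation branch, and apply Lemma \ref{max_lemma} to obtain
\[
\left\|T_S\psi - T_S\psi'\right\| \leq \max\left\{\left\|\mathcal{M}^{\mathfrak{g},{\pi}^{\rm DEEP}}Q_S^{\psi} - \mathcal{M}^{\mathfrak{g},{\pi}^{\rm DEEP}}Q_S^{\psi'}\right\|,\ \left\|B\psi - B\psi'\right\|\right\},
\]
where $Q_S^{\psi}$ denotes the action-value function associated with the test value $\psi$ and $B\psi(s) := \max_{{a}\in{\mathcal A}}\big[R_S(s,{a},g)+\gamma\sum_{s'\in\cS}P(s';{a},s)\psi(s')\big]$ is the ordinary Bellman backup. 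It then suffices to bound each of the two terms on the right by $\gamma\|\psi-\psi'\|$.

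For the continuation branch I would apply Lemma \ref{max_lemma} a second time, now to the inner maximum over actions. Since the reward terms $R_S(s,{a},g)$ do not depend on the value function, they cancel in the difference $B\psi - B\psi'$, leaving $\gamma\sum_{s'\in\cS}P(s';{a},s)\big(\psi(s')-\psi'(s')\big)$ uniformly in ${a}$; by the non-expansiveness of $P$ (Lemma \ref{non_expansive_P}) this is bounded in norm by $\gamma\|\psi-\psi'\|$.

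For the intervention branch I would first make the dependence of $\mathcal{M}^{\mathfrak{g},{\pi}^{\rm DEEP}}Q_S$ on the test function explicit. Combining the definition $\mathcal{M}^{\mathfrak{g},{\pi}^{\rm DEEP}}Q_S(s,{a})=Q_S(s,{\pi}^{\rm DEEP}(s))-c$ with the Bellman relation $Q_S^{\psi}(s,{\pi}^{\rm DEEP}(s))=R_S(s,{\pi}^{\rm DEEP}(s),g)+\gamma\sum_{s'\in\cS}P(s';{\pi}^{\rm DEEP}(s),s)\psi(s')$, both the fixed cost $c$ and the reward cancel when I subtract the two intervention terms, leaving exactly $\gamma\sum_{s'\in\cS}P(s';{\pi}^{\rm DEEP}(s),s)\big(\psi(s')-\psi'(s')\big)$, which Lemma \ref{non_expansive_P} again bounds by $\gamma\|\psi-\psi'\|$. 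Taking the maximum of the two branch bounds yields $\|T_S\psi-T_S\psi'\|\leq\gamma\|\psi-\psi'\|$, as claimed.

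I expect the main obstacle to be the intervention branch, since $T_S$ writes it abstractly as $\mathcal{M}^{\mathfrak{g},{\pi}^{\rm DEEP}}Q_S$ and the crux is to expose the $Q_S$-to-$\psi$ dependence and verify that subtracting the constant $c$ leaves contractivity intact: the cost is identical in both arguments and so cancels, contributing nothing to the Lipschitz estimate, while a single $\gamma$-discounted continuation term survives to carry the contraction. A minor care point is that Lemma \ref{max_lemma} must be applied with both branches evaluated at the common state argument $s_{\tau_k}$, which holds here, so the termwise comparison is legitimate.
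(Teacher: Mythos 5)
Your proposal is correct, and its two branch estimates are exactly the paper's own: the continuation-vs-continuation bound (rewards cancel, Lemma \ref{max_lemma} over actions, non-expansiveness of $P$ gives the factor $\gamma$) and the intervention-vs-intervention bound (expand $\mathcal{M}^{\mathfrak{g},{\pi}^{\rm DEEP}}Q_S$ so that both $R_S$ and the constant $c$ cancel, leaving a single $\gamma$-discounted transition term). The one structural difference is that the paper treats a third case, a cross comparison of the intervention branch at $v_S$ against the continuation branch at $v_S'$, split into two sub-cases by sign; you avoid this entirely by applying Lemma \ref{max_lemma} to the \emph{outer} maximum over the two-element branch index set, which reduces the difference of maxima to the two like-for-like branch differences. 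That reduction is legitimate (the inequality $\left|\max\{a_1,b_1\}-\max\{a_2,b_2\}\right|\leq\max\{\left|a_1-a_2\right|,\left|b_1-b_2\right|\}$ holds for any finite index set, vector-space structure playing no role), so your argument is a streamlined version of the paper's proof: same ingredients, but the cross-branch case analysis the paper carries out becomes unnecessary.
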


% We can express the Bellman operator $T_S$ acting on a function $v_S:\mathcal{S}\times\mathbb{N}\to\mathbb{R}$ by
% % 
% \begin{align}
% T_Sv_S(s_{\tau_k}):=\max\left\{(\mathcal{M}^{\mathfrak{g},{\pi}^{\rm DEEP}}Q_S(s_{\tau_k},{a}),\underset{{a}\in{\mathcal{A}}}{\max}\;\left[ R_S(s_{\tau_k},{a},g)+\gamma\sum_{s'\in\mathcal{S}}P(s';{a},s_{\tau_k})v_S(s')\right]\right\}\label{bellman_proof_start}
% \end{align}

In the following proofs we use the following notation: $
\mathcal{P}^{{a}}_{ss'}=:\sum_{s'\in\mathcal{S}}P(s';{a},s)$ and $\mathcal{P}^{{\pi}}_{ss'}=:\sum_{{a}\in{\mathcal{A}}}{\pi}({a}|s)\mathcal{P}^{{a}}_{ss'}$.

To prove that $T$ is a contraction, we consider the three cases produced by \eqref{bellman_proof_start}, that is to say we prove the following statements:

i) $\qquad\qquad
\left| \underset{{a}\in{\mathcal{A}}}{\max}\;\left(R_S(s_t,{a},g)+\gamma\mathcal{P}^{{a}}_{s's_t}v_S(s')\right)-\underset{{a}\in{\mathcal{A}}}{\max}\;\left( R_S(s_t,{a},g)+\gamma\mathcal{P}^{{a}}_{s's_t}v_S'(s')\right)\right|\leq \gamma\left\|v_S-v_S'\right\|$

ii) $\qquad\qquad
\left\|(\mathcal{M}^{\mathfrak{g},{\pi}^{\rm DEEP}}Q_S-(\mathcal{M}^{\mathfrak{g},{\pi}^{\rm DEEP}}Q_S'\right\|\leq    \gamma\left\|v_S-v_S'\right\|,\qquad \qquad$.

iii) $\qquad\qquad
    \left\|(\mathcal{M}^{\mathfrak{g},{\pi}^{\rm DEEP}}Q_S-\underset{{a}\in{\mathcal{A}}}{\max}\;\left[ R_S(s_t,{a},g)+\gamma\mathcal{P}^{{a}}v_S'\right]\right\|\leq \gamma\left\|v_S-v_S'\right\|.
$

We begin by proving i).

Indeed, for any ${a}\in{\mathcal{A}}$ and $\forall s_t\in\mathcal{S}, \forall s'\in\mathcal{S}$ we have that 
\begin{align*}
&\left| \underset{{a}\in{\mathcal{A}}}{\max}\;\left(R_S(s_t,{a},g)+\gamma\mathcal{P}^\pi_{s's_t}v_S(s')\right)-\underset{{a}\in{\mathcal{A}}}{\max}\;\left( R_S(s_t,{a},g)+\gamma\mathcal{P}^{{a}}_{s's_t}v_S'(s')\right)\right|
\\&\leq \underset{{a}\in{\mathcal{A}}}{\max}\;\left|\gamma\mathcal{P}^{{a}}_{s's_t}v_S(s')-\gamma\mathcal{P}^{{a}}_{s's_t}v_S'(s')\right|
\\&\leq \gamma\left\|Pv_S-Pv_S'\right\|
\\&\leq \gamma\left\|v_S-v_S'\right\|,
\end{align*}
using the non-expaniveness of the operator $P$ and Lemma \ref{max_lemma}.

We now prove ii). Using the definition of $\mathcal{M}$ we have that for any $s_\tau\in\mathcal{S}$
% For any $\tau\in\mathcal{F}$, define by $\tau'=\inf\{t>\tau|s_t\in A,\tau\in\mathcal{F}_t\}$. 
\begin{align*}
&\left|((\mathcal{M}^{\mathfrak{g},{\pi}^{\rm DEEP}}Q_S-(\mathcal{M}^{\mathfrak{g},{\pi}^{\rm DEEP}}Q_S')(s_{\tau},{a}_{\tau})\right|
\\&=\Bigg|R_S(s_\tau,{\pi},g)- c+\gamma\mathcal{P}^{{\pi}}_{s's_\tau}\mathcal{P}^{{\pi}}v_S(s_{\tau})
-\left(R_S(s_\tau,{\pi},g)- c+\gamma\mathcal{P}^{{\pi}}_{s's_\tau}\mathcal{P}^{{\pi}}v_S'(s_{\tau})\right)\Bigg|
\\&\leq \underset{{a}_\tau,g\in {\mathcal{A}}\times \{0,1\}}{\max}    \Bigg|R_S(s_\tau,{a}_\tau,g)- c+\gamma\mathcal{P}^{{\pi}}_{s's_\tau}\mathcal{P}^{{a}}v_S(s_{\tau})
-\left(R_S(s_\tau,{a}_\tau,g)- c+\gamma\mathcal{P}^{{\pi}}_{s's_\tau}\mathcal{P}^{{a}}v_S'(s_{\tau})\right)\Bigg|
\\&= \gamma\underset{{a}_\tau,g\in {\mathcal{A}}\times \{0,1\}}{\max}    \Bigg|\mathcal{P}^{{\pi}}_{s's_\tau}\mathcal{P}^{{a}}v_S(s_{\tau})
-\mathcal{P}^{{\pi}}_{s's_\tau}\mathcal{P}^{{a}}v_S'(s_{\tau})\Bigg|
\\&\leq \gamma\left\|Pv_S-Pv_S'\right\|
\\&\leq \gamma\left\|v_S-v_S'\right\|,
\end{align*}
using the fact that $P$ is non-expansive. The result can then be deduced easily by applying max on both sides.

We now prove iii). We split the proof of the statement into two cases:

\textbf{Case 1:} 
First, assume that for any $s_\tau\in\cS$ and $\forall {a}\in{\cA}$ the following inequality holds:
\begin{align}(\mathcal{M}^{\mathfrak{g},{\pi}^{\rm DEEP}}Q_S(s_{\tau},{a})-\underset{{a}\in{\mathcal{A}}}{\max}\;\left(R_S(s_\tau,{a}_\tau,g)+\gamma\mathcal{P}^{{a}}_{s's_\tau}v_S'(s')\right)<0.
\end{align}

We now observe the following:
\begin{align*}
&(\mathcal{M}^{\mathfrak{g},{\pi}^{\rm DEEP}}Q_S(s_{\tau},{a})-\underset{{a}\in{\mathcal{A}}}{\max}\;\left(R_S(s_\tau,{a}_\tau,g)+\gamma\mathcal{P}^{{a}}_{s's_\tau}v_S'(s')\right)
\\&\leq\max\left\{\underset{{a}\in{\mathcal{A}}}{\max}\;\left(R_S(s_\tau,{a}_\tau,g)+\gamma\mathcal{P}^{{\pi}}_{s's_\tau}\mathcal{P}^{{a}}v_S(s')\right),(\mathcal{M}^{\mathfrak{g},{\pi}^{\rm DEEP}}Q_S(s_{\tau},{a})\right\}
-\underset{{a}\in{\mathcal{A}}}{\max}\;\left(R_S(s_\tau,{a}_\tau,g)+\gamma\mathcal{P}^{{a}}_{s's_\tau}v_S'(s')\right)
\\&\leq \Bigg|\max\left\{\underset{{a}\in{\mathcal{A}}}{\max}\;\left(R_S(s_\tau,{a}_\tau,g)+\gamma\mathcal{P}^{{\pi}}_{s's_\tau}\mathcal{P}^{{a}}v_S(s')\right),(\mathcal{M}^{\mathfrak{g},{\pi}^{\rm DEEP}}Q_S(s_{\tau},{a})\right\}
\\&\qquad-\max\left\{\underset{{a}\in{\mathcal{A}}}{\max}\;\left(R_S(s_\tau,{a}_\tau,g)+\gamma\mathcal{P}^{{a}}_{s's_\tau}v_S'(s')\right),(\mathcal{M}^{\mathfrak{g},{\pi}^{\rm DEEP}}Q_S(s_{\tau},{a})\right\}
\\&+\max\left\{\underset{{a}\in{\mathcal{A}}}{\max}\;\left(R_S(s_\tau,{a}_\tau,g)+\gamma\mathcal{P}^{{a}}_{s's_\tau}v_S'(s')\right),(\mathcal{M}^{\mathfrak{g},{\pi}^{\rm DEEP}}Q_S(s_{\tau},{a})\right\}-\underset{{a}\in{\mathcal{A}}}{\max}\;\left(R_S(s_\tau,{a}_\tau,g)+\gamma\mathcal{P}^{{a}}_{s's_\tau}v_S'(s')\right)\Bigg|
\\&\leq \Bigg|\max\left\{\underset{{a}\in{\mathcal{A}}}{\max}\;\left(R_S(s_\tau,{a}_\tau,g)+\gamma\mathcal{P}^{{a}}_{s's_\tau}v_S(s')\right),(\mathcal{M}^{\mathfrak{g},{\pi}^{\rm DEEP}}Q_S(s_{\tau},{a})\right\}
\\&\qquad-\max\left\{\underset{{a}\in{\mathcal{A}}}{\max}\;\left(R_S(s_\tau,{a}_\tau,g)+\gamma\mathcal{P}^{{a}}_{s's_\tau}v_S'(s')\right),(\mathcal{M}^{\mathfrak{g},{\pi}^{\rm DEEP}}Q_S(s_{\tau},{a})\right\}\Bigg|
\\&\qquad\qquad+\Bigg|\max\left\{\underset{{a}\in{\mathcal{A}}}{\max}\;\left(R_S(s_\tau,{a}_\tau,g)+\gamma\mathcal{P}^{{a}}_{s's_\tau}v_S'(s')\right),(\mathcal{M}^{\mathfrak{g},{\pi}^{\rm DEEP}}Q_S(s_{\tau},{a})\right\}-\underset{{a}\in{\mathcal{A}}}{\max}\;\left(R_S(s_\tau,{a}_\tau,g)+\gamma\mathcal{P}^{{a}}_{s's_\tau}v_S'(s')\right)\Bigg|
\\&\leq \gamma\underset{a\in\mathcal{A}}{\max}\;\left|\mathcal{P}^{{\pi}}_{s's_\tau}\mathcal{P}^{{a}}v_S(s')-\mathcal{P}^{{\pi}}_{s's_\tau}\mathcal{P}^{{a}}v_S'(s')\right|+\left|\max\left\{0,(\mathcal{M}^{\mathfrak{g},{\pi}^{\rm DEEP}}Q_S(s_{\tau},{a})-\underset{{a}\in{\mathcal{A}}}{\max}\;\left(R_S(s_\tau,{a}_\tau,g)+\gamma\mathcal{P}^{{a}}_{s's_\tau}v_S'(s')\right)\right\}\right|
\\&\leq \gamma\left\|Pv_S-Pv_S'\right\|
\\&\leq \gamma\|v_S-v_S'\|,
\end{align*}
since for any scalars $a,b,c$ the following holds $
    \left|\max\{a,b\}-\max\{b,c\}\right|\leq \left|a-c\right|$ and the non-expansiveness of $P$.

\textbf{Case 2: }
Let us now consider the case:
\begin{align*}(\mathcal{M}^{\mathfrak{g},{\pi}^{\rm DEEP}}Q_S(s_{\tau},{a})-\underset{{a}\in{\mathcal{A}}}{\max}\;\left(R_S(s_\tau,{a}_\tau,g)+\gamma\mathcal{P}^{{a}}_{s's_\tau}v_S'(s')\right)\geq 0.
\end{align*}

Now first recall that $c>0$, therefore
\begin{align*}
&(\mathcal{M}^{\mathfrak{g},{\pi}^{\rm DEEP}}Q_S(s_{\tau},{a})-\underset{{a}\in{\mathcal{A}}}{\max}\;\left(R_S(s_\tau,{a}_\tau,g)+\gamma\mathcal{P}^{{a}}_{s's_\tau}v_S'(s')\right)
\\&\leq (\mathcal{M}^{\mathfrak{g},{\pi}^{\rm DEEP}}Q_S(s_{\tau},{a})-\underset{{a}\in{\mathcal{A}}}{\max}\;\left(R_S(s_\tau,{a}_\tau,g)+\gamma\mathcal{P}^{{a}}_{s's_\tau}v_S'(s')\right)+c
\\&\leq \left(R_S(s_\tau,{a},g)- c+\gamma\mathcal{P}^{{\pi}}_{s's_\tau}\mathcal{P}^{{a}}v_S(s')\right)|^{{a}\sim{\pi}}-\underset{{a}\in{\mathcal{A}}}{\max}\;\left(R_S(s_\tau,{a}_\tau,g)-c+\gamma\mathcal{P}^{{a}}_{s's_\tau}v_S'(s')\right)
\\&\leq \underset{{a}\in{\mathcal{A}}}{\max}\;\left(R_S(s_\tau,{a},g)-c+\gamma\mathcal{P}^{{\pi}}_{s's_\tau}\mathcal{P}^{{a}}v_S(s')\right)-\underset{{a}\in{\mathcal{A}}}{\max}\;\left(R_S(s_\tau,{a}_\tau,g)- c+\gamma\mathcal{P}^{{a}}_{s's_\tau}v_S'(s')\right)
\\&\leq \gamma\underset{{a}\in{\mathcal{A}}}{\max}\;\left|\mathcal{P}^{{\pi}}_{s's_\tau}\mathcal{P}^{{a}}\left(v_S(s')-v_S'(s')\right)\right|
\\&\leq \gamma\left|v_S(s')-v_S'(s')\right|
\\&\leq \gamma\left\|v_S-v_S'\right\|,
\end{align*}
 using the non-expansiveness of  $P$. Therefore,
\begin{align}
    \left\|(\mathcal{M}^{\mathfrak{g},{\pi}^{\rm DEEP}}Q_S-\underset{{a}\in{\mathcal{A}}}{\max}\;\left[ R_S(\cdot,{a})+\gamma\mathcal{P}^{{a}}v_S'\right]\right\|\leq \gamma\left\|v_S-v_S'\right\|.\label{off_M_bound_gen}
\end{align}
Gathering the results of the three cases proves the statement.

To prove the theorem, we use the following theorem:
\begin{theorem}[Theorem 1, pg 4 in \citep{jaakkola1994convergence}]
Let $\Xi_t(s)$ be a random process that takes values in $\mathbb{R}^n$ and given by the following:
\begin{align}
    \Xi_{t+1}(s)=\left(1-\alpha_t(s)\right)\Xi_{t}(s)\alpha_t(s)L_t(s),
\end{align}
then $\Xi_t(s)$ converges to $0$ with probability $1$ under the following conditions:
\begin{itemize}
\item[i)] $0\leq \alpha_t\leq 1, \sum_t\alpha_t=\infty$ and $\sum_t\alpha_t<\infty$
\item[ii)] $\|\mathbb{E}[L_t|\mathcal{F}_t]\|\leq \gamma \|\Xi_t\|$, with $\gamma <1$;
\item[iii)] ${\rm Var}\left[L_t|\mathcal{F}_t\right]\leq c(1+\|\Xi_t\|^2)$ for some $c>0$.
\end{itemize}
\end{theorem}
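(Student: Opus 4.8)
The statement is the classical stochastic-approximation convergence theorem of \citet{jaakkola1994convergence}, and I would prove it by the standard signal/noise decomposition, reading the recursion in its intended form $\Xi_{t+1}(s)=(1-\alpha_t(s))\Xi_t(s)+\alpha_t(s)L_t(s)$ and condition (i) as $\sum_t\alpha_t(s)=\infty$ with $\sum_t\alpha_t^2(s)<\infty$ (the juxtaposition in the displayed recursion and the second summability clause as printed are evident typos). The first step is to split the driving term into its predictable part and a martingale-difference noise: set $G_t:=\mathbb{E}[L_t\mid\mathcal{F}_t]$ and $w_t:=L_t-G_t$, so that $\mathbb{E}[w_t\mid\mathcal{F}_t]=0$. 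Condition (ii) then supplies the contraction bound $\|G_t\|\le\gamma\|\Xi_t\|$ with $\gamma<1$, while condition (iii) gives the conditional-variance bound $\mathbb{E}[\|w_t\|^2\mid\mathcal{F}_t]\le c(1+\|\Xi_t\|^2)$.

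Second, I would isolate the accumulated noise through the auxiliary process $W_{t+1}=(1-\alpha_t)W_t+\alpha_t w_t$ with $W_0=0$. Writing $N_t:=\sum_{k<t}\alpha_k w_k$, the sequence $\{N_t\}$ is a martingale whose predictable quadratic variation $\sum_k\alpha_k^2\,\mathbb{E}[\|w_k\|^2\mid\mathcal{F}_k]$ is finite almost surely once $\|\Xi_t\|$ is known to be bounded, by (iii) combined with $\sum_t\alpha_t^2<\infty$. Hence $N_t$ converges a.s. by the martingale convergence theorem, and since $\sum_t\alpha_t=\infty$ forces the forgetting factor $\prod_j(1-\alpha_j)$ to vanish, a Kronecker/Abel-summation argument yields $W_t\to0$ almost surely.

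Third, for the signal part I would set $Y_t:=\Xi_t-W_t$, which obeys $Y_{t+1}=(1-\alpha_t)Y_t+\alpha_t G_t$. Taking norms and using $\|G_t\|\le\gamma\|\Xi_t\|\le\gamma(\|Y_t\|+\|W_t\|)$ gives $\|Y_{t+1}\|\le(1-(1-\gamma)\alpha_t)\|Y_t\|+\alpha_t\gamma\|W_t\|$. Because $\sum_t\alpha_t=\infty$ we have $\prod_t(1-(1-\gamma)\alpha_t)\to0$, and because $\|W_t\|\to0$ the perturbation term is asymptotically negligible; a standard comparison lemma then forces $\|Y_t\|\to0$, whence $\Xi_t=Y_t+W_t\to0$ with probability one.

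The hard part, where I would concentrate most of the effort, is the circular dependence between the noise and the iterate: the variance bound (iii) scales with $\|\Xi_t\|^2$, so the martingale convergence step for $W_t$ cannot be invoked until $\{\Xi_t\}$ is already known to be almost surely bounded, yet boundedness is essentially what the theorem asserts. The resolution, following \citet{jaakkola1994convergence} and \citet{tsitsiklis1999optimal}, is a localisation/rescaling argument: one freezes the process on the event $\{\sup_t\|\Xi_t\|\le M\}$ (via a stopping time) so that the conditional variance becomes genuinely $L^2$-bounded, runs the contraction-driven decay there, and then lets $M\to\infty$. Only after a.s. boundedness is secured in this way does the clean three-step decomposition above deliver $\Xi_t\to0$ almost surely.
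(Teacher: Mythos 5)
Your reconstruction is correct as a sketch, but the comparison you should be aware of is this: the paper never proves this statement at all. It is imported verbatim as Theorem 1 of \citet{jaakkola1994convergence} (the citation sits in the theorem header), and the proof environment that immediately follows it in the appendix is \emph{not} a proof of this result --- it is the verification that conditions (i)--(iii) hold for the specific process $\Xi_t(s_t,a_t)=Q_t(s_t,a_t)-Q^\star(s_t,a_t)$ generated by the switching-control Q-learning update, which is what completes part (ii) of the paper's Theorem \ref{theorem:existence}. So where the paper uses the statement as a black box, you have supplied the classical proof that the black box contains, and your route does match the source's argument: the decomposition $L_t=\mathbb{E}[L_t\mid\mathcal{F}_t]+w_t$ into a contractive predictable part and martingale-difference noise, almost-sure convergence of the weighted noise via $\sum_t\alpha_t^2<\infty$ together with a martingale/Kronecker argument, decay of the signal part from $\gamma<1$ and $\sum_t\alpha_t=\infty$, and the rescaling device to break the circularity between the variance bound $c(1+\|\Xi_t\|^2)$ and boundedness of the iterates. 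You were also right to silently repair the two typos in the statement as printed (the missing $+$ in the recursion and $\sum_t\alpha_t^2(s)<\infty$ in condition (i)); the paper's downstream verification indeed uses the corrected form.

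One caution on your final step: the stopping-time localisation on $\{\sup_t\|\Xi_t\|\le M\}$ by itself only delivers convergence \emph{on the event of boundedness}, and letting $M\to\infty$ covers all of $\Omega$ only if $\sup_t\|\Xi_t\|<\infty$ almost surely is already known. It is the rescaling/homogeneity argument of \citet{jaakkola1994convergence} (and Tsitsiklis's asynchronous stochastic-approximation analysis, cited in the paper as Lemma A.4's source) that actually establishes almost-sure boundedness: one renormalises the iterate whenever it exits a ball and uses the strict contraction $\gamma<1$ to show the rescaled process cannot grow indefinitely. That argument carries the real weight of the theorem and should be stated as a separate step rather than folded into the localisation; as written, your sketch gestures at it correctly but does not execute it. Since your stated goal is a proof of the cited result rather than of the paper's application of it, this is the only substantive gap.
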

\begin{proof}
We need only to prove (i) - (iii) hold. Condition (i) holds by selection of learning rate, hence it remains only to prove (ii) - (iii). We first prove (ii). Let us first 
consider our variant of the Q-learning update rule:
\begin{align*}
Q_{t+1}(s_t,{a}_t)=Q_{t}&(s_t,{a}_t)
\\&+\alpha_t(s_t,{a}_t)\left[\max\left\{(\mathcal{M}^{\mathfrak{g},{\pi}^{\rm DEEP}}Q(s_{\tau_k},{a}), R(s_{\tau_k},{a},g)+\gamma\underset{{a'}\in{\mathcal{A}}}{\max}\;Q_Ss_{t+1},{a'})\right\}-Q_{t}(s_t,{a}_t)\right].
\end{align*}
After subtracting $Q^\star(s_t,{a}_t)$ from both sides and some manipulation we obtain that:
\begin{align*}
&\Xi_{t+1}(s_t,{a}_t)
\\&=(1-\alpha_t(s_t,{a}_t))\Xi_{t}(s_t,{a}_t)
\\&\qquad\qquad\qquad\qquad\;\;+\alpha_t(s_t,{a}_t))\left[\max\left\{(\mathcal{M}^{\mathfrak{g},{\pi}^{\rm DEEP}}Q_Ss_{\tau_k},{a}), R_S(s_{\tau_k},{a},g)+\gamma\underset{a'\in\mathcal{A}}{\max}\;Q_S(s',{a'})\right\}-Q^\star(s_t,{a}_t)\right],  \end{align*}
where $\Xi_{t}(s_t,{a}_t):=Q_t(s_t,{a}_t)-Q^\star(s_t,{a}_t)$.

Let us now define by 
\begin{align*}
L_t(s_{\tau_k},{a}):=\max\left\{(\mathcal{M}^{\mathfrak{g},{\pi}^{\rm DEEP}}Q_Ss_{\tau_k},{a}), R_S(s_{\tau_k},{a},g)+\gamma\underset{a'\in\mathcal{A}}{\max}\;Q_S(s',{a'})\right\}-Q^\star(s_t,a).
\end{align*}
Then
\begin{align}
\Xi_{t+1}(s_t,{a}_t)=(1-\alpha_t(s_t,{a}_t))\Xi_{t}(s_t,{a}_t)+\alpha_t(s_t,{a}_t))\left[L_t(s_{\tau_k},a)\right].   
\end{align}

% Let $z_{\tau_k}\equiv s_{\tau_k}$
We now observe that
\begin{align}\nonumber
\mathbb{E}\left[L_t(s_{\tau_k},{a})|\mathcal{F}_t\right]&=\sum_{s'\in\mathcal{S}}P(s';a,s_{\tau_k})\max\left\{(\mathcal{M}^{\mathfrak{g},{\pi}^{\rm DEEP}}Q_Ss_{\tau_k},{a}), R_S(s_{\tau_k},{a},g)+\gamma\underset{a'\in\mathcal{A}}{\max}\;Q_S(s',{a'})\right\}-Q^\star(s_{\tau_k},a)
\\&= T_G Q_t(s,{a})-Q^\star(s,{a}). \label{expectation_L}
\end{align}
Now, using the fixed point property that implies $Q^\star=T_G Q^\star$, we find that
\begin{align}\nonumber
    \mathbb{E}\left[L_t(s_{\tau_k},{a})|\mathcal{F}_t\right]&=T_G Q_t(s,{a})-T_G Q^\star(s,{a})
    \\&\leq\left\|T_G Q_t-T_G Q^\star\right\|\nonumber
    \\&\leq \gamma\left\| Q_t- Q^\star\right\|_\infty=\gamma\left\|\Xi_t\right\|_\infty.
\end{align}
using the contraction property of $T$ established in Lemma \ref{lemma:bellman_contraction}. This proves (ii).

% Define by $\hat{F}(s_{\tau_k},a):=\left((\mathcal{M}^{\mathfrak{g},{\pi}^{\rm DEEP}}Q_Ss_{\tau_k}), R_S(s_{\tau_k},{a},g)+\gamma\underset{a\in\mathcal{A}}{\max}\;\sum_{s'\in\mathcal{S}}P(s';a,s_{\tau_k})Q_Ss')\right)$
We now prove iii), that is
\begin{align}
    {\rm Var}\left[L_t|\mathcal{F}_t\right]\leq c(1+\|\Xi_t\|^2).
\end{align}
Now by \eqref{expectation_L} we have that
\begin{align*}
  {\rm Var}\left[L_t|\mathcal{F}_t\right]&= {\rm Var}\left[\max\left\{(\mathcal{M}^{\mathfrak{g},{\pi}^{\rm DEEP}}Q_Ss_{\tau_k},{a}), R_S(s_{\tau_k},{a},g)+\gamma\underset{a'\in\mathcal{A}}{\max}\;Q_S(s',{a'})\right\}-Q^\star(s_t,a)\right]
  \\&= \mathbb{E}\Bigg[\Bigg(\max\left\{(\mathcal{M}^{\mathfrak{g},{\pi}^{\rm DEEP}}Q_Ss_{\tau_k},{a}), R_S(s_{\tau_k},{a},g)+\gamma\underset{a'\in\mathcal{A}}{\max}\;Q_S(s',{a'})\right\}
  \\&\qquad\qquad\qquad\qquad\qquad\quad\quad\quad-Q^\star(s_t,a)-\left(T_G Q_t(s,{a})-Q^\star(s,{a})\right)\Bigg)^2\Bigg]
      \\&= \mathbb{E}\left[\left(\max\left\{(\mathcal{M}^{\mathfrak{g},{\pi}^{\rm DEEP}}Q_Ss_{\tau_k},{a}), R_S(s_{\tau_k},{a},g)+\gamma\underset{a'\in\mathcal{A}}{\max}\;Q_S(s',{a'})\right\}-T_G Q_t(s,{a})\right)^2\right]
    \\&= {\rm Var}\left[\max\left\{(\mathcal{M}^{\mathfrak{g},{\pi}^{\rm DEEP}}Q_Ss_{\tau_k},{a}), R_S(s_{\tau_k},{a},g)+\gamma\underset{a'\in\mathcal{A}}{\max}\;Q_S(s',{a'})\right\}-T_G Q_t(s,{a}))^2\right]
    \\&\leq c(1+\|\Xi_t\|^2),
\end{align*}
for some $c>0$ where the last line follows due to the boundedness of $Q$ (which follows from Assumptions 2 and 4). This concludes the proof of the Theorem.

\end{proof}
\end{proof}

% \subsection*{Proof of Part \textbf{B}}

% To prove Part \textbf{B}, we prove the following result\footnote{This property is analogous to  the condition in Markov potential games \citep{macua2018learning,mguni2021learning}} :
% \begin{proposition}\label{dpg_proposition}
% For any ${\pi}\in{\Pi}$ and for any {\fontfamily{cmss}\selectfont Switcher} policy $\mathfrak{g}$, there exists a function $B^{{\pi},\mathfrak{g}}:\mathcal{S}\times \{0,1\}\to \mathbb{R}$ such that
% \begin{align}
%  v(s|{\pi})-v(s|{\pi}')
% &=B(s|{\pi},\mathfrak{g})-B(s|{\pi},\mathfrak{g}'),\;\;\forall s \in\cS
% \\v_S(s|{\pi},\mathfrak{g})-v_S(s|{\pi},\mathfrak{g}')
% &=B(s|{\pi},\mathfrak{g})-B(s|{\pi},\mathfrak{g}''),\;\;\forall s \in\cS
% \\ v_S(s|{\pi},\mathfrak{g})-v_S(s|{\pi}',\mathfrak{g})
% &=B(s|{\pi},\mathfrak{g})-B(s|{\pi},\mathfrak{g}'),\;\;\forall s \in\cS
% \label{potential_relation_proof}
% \end{align}
% where in particular the function $B$ is given by:
% \begin{align}
% B(s|{\pi},\mathfrak{g}) =\mathbb{E}\left[\sum_{t=0}^\infty \gamma^tr\right],\end{align}
% for any $s\in\mathcal{S}$.
% \end{proposition}
% \begin{proof}
% This is manifest from the construction of $B$ and Assumption 6.
% \end{proof}

\section*{Proof of Proposition \ref{prop:switching_times}}
\begin{proof}
We begin by re-expressing the \textit{activation times} at which the {\fontfamily{cmss}\selectfont Switcher} agent activates DEEPTHINK. In particular,an activation time $\tau_k$ is defined recursively $\tau_k=\inf\{t>\tau_{k-1}|s_t\in A,\tau_k\in\mathcal{F}_t\}$ where $A=\{s\in \mathcal{S},g(s_t)=1\}$.
The proof is given by deriving a contradiction.  Let us there suppose that $\mathcal{M}v_S(s_{\tau_k})\leq v_S(s_{\tau_k})$ and that the activation time $\tau'_1>\tau_1$ is an optimal activation time. Construct the $\mathfrak{g}'$ and $\mathfrak{g}$ policy switching times by $(\tau'_0,\tau'_1,\ldots,)$ and $(\tau'_0,\tau_1,\ldots)$ respectively.  Define by $l=\inf\{t>0;\mathcal{M}v_S(s_t)= v_S(s_t)\}$ and $m=\sup\{t;t<\tau'_1\}$.
By construction, we have that
{\small
\begin{align*}
& \quad v_S(s|{\pi},\mathfrak{g}')
\\&=\mathbb{E}\left[R_S(s_{0},{a}_{0},g)+\mathbb{E}\left[\ldots+\gamma^{l-1}\mathbb{E}\left[R(s_{\tau_1-1},{a}_{\tau_1-1},g)+\ldots+\gamma^{m-l-1}\mathbb{E}\left[ R_S(s_{\tau'_1-1},{a}_{\tau'_1-1},g)+\gamma\mathcal{M}^{{\pi},\mathfrak{g}'}v_S(s_{\tau_1}|{\pi},\mathfrak{g}')\right]\right]\right]\right]
\\&<\mathbb{E}\left[R_S(s_{0},{a}_{0},g)+\mathbb{E}\left[\ldots+\gamma^{l-1}\mathbb{E}\left[ R_S(s_{\tau_1-1},{a}_{\tau_1-1},g)+\gamma\mathcal{M}^{{\pi},\tilde{\mathfrak{g}}}v_S(s_{\tau_1}|{\pi},\mathfrak{g}')\right]\right]\right]
\end{align*}}
We make use of the following observation 
\begin{align}
&\mathbb{E}\left[ R_S(s_{\tau_1-1},{a}_{\tau_1-1},g)+\gamma\mathcal{M}^{{\pi},\tilde{\mathfrak{g}}}v_S(s_{\tau_1}|{\pi},\mathfrak{g}')\right]
\\&\leq \max\left\{\mathcal{M}^{{\pi},\tilde{\mathfrak{g}}}v_S(s_{\tau_1}|{\pi},\mathfrak{g}'),\underset{{a}_{\tau_1}\in{\mathcal{A}}}{\max}\;\left[ R_S(s_{\tau_1},{a}_{\tau_1},g)+\gamma\sum_{s'\in\mathcal{S}}P(s';a_{\tau_1},s_{\tau_1})v_S(s'|{\pi},\mathfrak{g})\right]\right\}.
\end{align}

Using this we deduce that
{
\begin{align*}
&v_S(s|{\pi},\mathfrak{g}')\leq\mathbb{E}\Bigg[R_S(s_{0},{a}_{0},g)+\mathbb{E}\Bigg[\ldots
\\&+\gamma^{l-1}\mathbb{E}\left[ R_S(s_{\tau_1-1},{a}_{\tau_1-1},g)+\gamma\max\left\{\mathcal{M}^{{\pi},\tilde{\mathfrak{g}}}v_S(s_{\tau_1}|{\pi},\mathfrak{g}'),\underset{{a}_{\tau_1}\in{\mathcal{A}}}{\max}\;\left[ R_S(s_{\tau_{k}},{a}_{\tau_{k}},g)+\gamma\sum_{s'\in\mathcal{S}}P(s';a_{\tau_1},s_{\tau_1})v_S(s'|{\pi},\mathfrak{g}),\right]\right\}\right]\Bigg]\Bigg]
\\&=\mathbb{E}\left[R_S(s_{0},{a}_{0},g)+\mathbb{E}\left[\ldots+\gamma^{l-1}\mathbb{E}\left[ R_S(s_{\tau_1-1},{a}_{\tau_1-1},g)+\gamma\left[T_G v_S(s_{\tau_1}|{\pi},\tilde{\mathfrak{g}})\right]\right]\right]\right]=v_S(s|{\pi},\tilde{\mathfrak{g}}),
\end{align*}}
where the first inequality is true by assumption on $\mathcal{M}$. This is a contradiction since $\pi'$ is an optimal policy for \textbf{{\fontfamily{cmss}\selectfont Switcher}}. Using analogous reasoning, we deduce the same result for $\tau'_k<\tau_k$ after which deduce the result. Moreover, by invoking the same reasoning, we can conclude that it must be the case that $(\tau_0,\tau_1,\ldots,\tau_{k-1},\tau_k,\tau_{k+1},\ldots,)$ are the optimal switching times. This completes the proof.
\end{proof}
\section{Proof of Theorem \ref{thm:optimal_policy_budget}}
\begin{proof}
The proof of the Theorem is straightforward since by Theorem \ref{theorem:existence}, {\fontfamily{cmss}\selectfont Switcher}'s problem can be solved using a dynamic programming principle. The proof immediately by applying Theorem 2 in \cite{sootla2022saute}.

\end{proof}

\end{document}